\let\@fnsymbol\@alph
\DeclareMathOperator{\spn}{span}
\providecommand{\algorithmname}{Algorithm}
\newcommand{\norm}[1]{\left\lVert#1\right\rVert}
\acrodef{PCA}{Principal Component Analysis}
\acrodef{EVD}{Eigenvalue Decomposition}
\acrodef{SVD}{Singular Value Decomposition}
 \acrodef{STD}{Standard Deviation}
\newtheorem{theorem}{Theorem}[section]
\newtheorem{corollary}{Corollary}[theorem]
\begin{document}
	
%
%


\title{Mahalanonbis Distance Informed by Clustering}

\author[1]{Almog Lahav \footnote{salmogl@campus.technion.ac.il}}
\author[1]{Ronen Talmon \footnote{ronen@ee.technion.ac.il}}
\author[2]{Yuval Kluger \footnote{yuval.kluger@yale.edu}}

\affil[1]{Technion Israel Institute of Technology, Haifa 32000, Israel}
\affil[2]{Department of Pathology, Yale University School of Medicine, New Haven, CT 06520}
\date{}
\maketitle

\begin{abstract}
	{	A fundamental question in data analysis, machine learning and signal processing is how to compare between data points. The choice of the distance metric is specifically challenging for high-dimensional data sets, where the problem of meaningfulness is more prominent (e.g. the Euclidean distance between images).
		In this paper, we propose to exploit a property of high-dimensional data that is usually ignored - which is the structure stemming from the relationships between the coordinates. 	
		Specifically we show that organizing similar coordinates in clusters can be exploited for the construction of the Mahalanobis distance between samples. 
		When the observable samples are generated by a nonlinear transformation of hidden variables, the Mahalanobis distance allows the recovery of the Euclidean distances in the hidden space.
		We illustrate the advantage of our approach on a synthetic example where the discovery of clusters of correlated coordinates improves	the estimation of the principal directions of the samples. 
		Our method was applied to real data of gene expression for lung adenocarcinomas (lung cancer). By using the proposed metric we found a partition of subjects to risk groups	with a good separation between their Kaplan-Meier survival plot.
	}
	{metric learning, geometric analysis, manifold learning, intrinsic modeling, biclustering, gene expression }
\end{abstract}

\section{Introduction}
\label{sec:intro}

Recent technological advances give rise to the collection and storage of massive complex data sets.
Consider a data matrix $\mathbf{D} \in \mathbb{R}^{m\times n}$, whose columns consist of high-dimensional samples (of dimension $m$) and whose rows consist of $n$ coordinates or features. We denote the columns (samples) in $\mathbb{R}^m$ by $\left\{ c_{i}\right\} _{i=1}^{n}$, and the rows (coordinates) in $\mathbb{R}^n$ by $\left\{ r_{j}\right\} _{j=1}^{m}$.

Many analysis techniques, such as kernel methods \cite{lai2000kernel, scholkopf2001learning, liu2011kernel} and manifold learning \cite{belkin2003laplacian,coifman2006diffusion,roweis2000nonlinear,tenenbaum2000global}, are based on comparisons between samples. Such analysis can be applied to the data stored in the matrix $\mathbf{D}$, either by measuring similarities between columns or between rows. Furthermore, for high dimensional data set, this can be done in both dimensions, columns and rows.

An example for analysis that is based on distances in both dimensions  
was presented by Brazma et al. \cite{brazma2000gene} in the context of gene expression. In this case, the columns represent subjects and the rows are their gene expressions. By comparing rows, we may recover co-regulated or functionally associated genes. In addition, similarity between pair of columns indicates subjects with similar gene expression profiles.
Both comparisons (of the rows and the columns) are independent, yet, often, and in particular in this example, the columns and rows might be coupled, e.g., a group of genes are co-expressed, only within a subset of subjects which belong to a specific population group.

Existing approaches for simultaneous analysis of samples and coordinates are based on bi-clustering \cite{madeira2004biclustering,busygin2008biclustering,chi2017convex}. Chi et al. proposed in \cite{chi2017convex} a convex formulation of the bi-clustering problem based on the assumption that the matrix $\mathbf{D}$ exhibits a latent checkerboard structure. Their solution attempts to approximate this structure by alternating between convex clustering of rows and convex clustering of columns, until convergence to a global minimum. However, this method, as well as other bi-clustering techniques, is restricted by the checkerboard structure assumption. Even if this assumption is indeed satisfied, the outcome represents the data in a very specific manner: to which 2D cluster each entry $\mathbf{D_{i,j}}$ belongs.

Manifold learning methods that take into account the coupling between dimensions (e.g., rows and columns) are presented in \cite{ankenman2014geometry, mishne2016hierarchical}. These methods describe algorithms for the construction of various data structures at multiple scales. 
Yet, they merely present constructive procedures lacking definitive (optimization) criteria.

In this work we present a framework for analysis of high dimensional
data sets, where the metric on the samples (columns) is informed by an organization
of the coordinates (rows). With the new {\em informed metric} we can organize
the samples, as well as perform other analysis tasks, such as: classification, data
representation, dimensionality reduction, etc. We show that in the case where the rows have a clustering structure, i.e. the k-means cost function takes on values
which are low enough, it assists us in the estimation of the (local or global) principal directions of the columns. 
Specifically we show that clusters of rows can be exploited for the computation of the Mahalanobis distance \cite{mahalanobis1936generalized} between columns, which is based on estimation of the inverse covariance matrix. One of the important properties of the Mahalanobis distance is its invariance to linear transformations. Assuming the observable columns are a linear function, $f$, of a set of hidden variables, the above property allows the recovery of the Euclidean distances in the hidden space. 
Based on a local version of the Mahalanobis distance proposed in \cite{Singer2008}, our approach is extended to the nonlinear case, where the Euclidean distances in the hidden space can be recovered even when $f$ is nonlinear.  

Our framework is beneficial especially when there is a limited number of columns for the computation of Mahalanobis distance. Either the whole data set is small compared to the rank of the inverse covariance matrix, or
a good locality requires a small neighborhood and therefore fewer
samples are taken for the estimation of the local inverse covariance.

We illustrate the advantage of this framework with a synthetic example,
where the discovery of clusters of correlated coordinates improves the estimation of the principal directions of the samples. The global Mahalanobis distance based on the proposed principal directions allows the separation between two classes of samples, while the conventional Mahalanobis distance fails. 

The framework was applied to real data of gene expression for lung
adenocarcinomas (lung cancer). We use a data set of gene expression
for $82$ lung adenocarcinomas. Consider
the above setup, the patients were defined as the samples, and the
$200$ genes with the highest variance as the coordinates. We built
a metric on the subjects that is informed by clusters of genes. By
using this metric we found a partition of the subjects to risk groups
with a good separation between their Kaplan-Meier survival plot. Based
on the p-value measure for this partition, our method achieves better
results than other data-driven algorithms \cite{Ma2015}, and it is
comparable to competing supervised algorithms \cite{shedden2008gene},
some of them based on additional clinical information. 

The remainder of the paper is organized as follows. In Section \ref{sec:background} we review the mathematical background of our method. Sections \ref{sec:global_M} and \ref{sec:local_M} present the global and the local versions of the proposed metric, respectively. In Section \ref{sec:toy_exp} we examine the global distance on a synthetic data set. We next present the application of the proposed local Mahalanobis distance to gene expression data in Section \ref{sec:gene}. Section \ref{sec:conclusions} concludes the paper.

\section{Background}
\label{sec:background}
\subsection{Global Mahalanobis Distance and PCA}
\label{subsec:mah_pca}

Consider an unobservable $K$-dimensional random vector $x$ with zero mean and covariance matrix $\Sigma_{x}$ defined on $\mathcal{X}\subseteq{\mathbb{R}^K}$.
An $m$-dimensional accessible random vector $c$, where $m \ge K$, is given by a linear function of $x$:
\begin{equation}
\label{eq:linear_t}
c=Ax
\end{equation}
where $A$ is an unknown $m\times K$ matrix of rank $K$. The covariance matrix of $c$ is therefore given by:
\begin{equation}
\Sigma=A\Sigma_{x}A^{T}
\end{equation}
Realizations of $x$ and $c$ are denoted by $x_i$ and $c_i$, respectively.
The above model represents many real-world problems in signal processing and data analysis where the data is driven by some latent process, and the observable samples (columns) $\left\{c_{i}\right\} _{i=1}^{n}$ are a function of a set of intrinsic hidden variables $\left\{x_{i}\right\} _{i=1}^{n}$.
The Mahalanobis distance between two samples in the hidden space, $x_1, x_2 \in \mathcal{X}$, is defined as follows:
\begin{equation}
d_{M_{x}}^{2}\left(x_{1},x_{2}\right)=\left(x_{1}-x_{2}\right)^{T}\Sigma_{x}^{-1}\left(x_{1}-x_{2}\right)
\label{eq:mahalanobis}
\end{equation}
One of the important properties of this distance is its invariance
to any linear transformation of rank $K$ of the data samples in $\mathcal{X}$.
The Mahalanobis distance between any two samples $c_1, c_2$ from the random vector $c$, is defined by: 
\begin{equation}
d_{M}^{2}\left(c_{1},c_{2}\right) =\left(c_{1}-c_{2}\right)^{T}\Sigma^{\dagger}\left(c_{1}-c_{2}\right)
\label{eq:mahalanobis_c}
\end{equation}
Note that here the Mahalanobis distance is written with the
pseudo-inverse of $\Sigma$ since it is not full rank (see appendix A):
\begin{equation}
\Sigma^{\dagger}=A^{\dagger^{T}}\Sigma_{x}^{-1}A^{\dagger}.\label{eq:cov_c_inv}
\end{equation}
\begin{theorem}
	\label{mahalnobis_theorem}
	Let $c_1, c_2$ be two samples from the random vector $c$. The Mahalanobis distance between them satisfies:
	\begin{equation}
	d_{M}^{2}\left(c_{1},c_{2}\right) = d_{M_{x}}^{2}\left(x_{1},x_{2}\right)
	\label{eq:mahalanobis_theorem}
	\end{equation}
\end{theorem}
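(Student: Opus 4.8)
The plan is to substitute the linear model $c_i = A x_i$ into the definition \eqref{eq:mahalanobis_c} and reduce it to \eqref{eq:mahalanobis} by purely algebraic manipulation, using the given expression \eqref{eq:cov_c_inv} for the pseudo-inverse $\Sigma^{\dagger}$. First I would write $c_1 - c_2 = A(x_1 - x_2)$, so that
\begin{equation}
d_{M}^{2}(c_1,c_2) = (x_1 - x_2)^T A^T \Sigma^{\dagger} A (x_1 - x_2) = (x_1 - x_2)^T A^T A^{\dagger^T}\Sigma_x^{-1} A^{\dagger} A (x_1 - x_2).
\end{equation}
The whole argument then hinges on showing that the middle factor collapses, i.e.\ that $A^{\dagger} A = I_K$ and hence $A^T A^{\dagger^T} = (A^{\dagger}A)^T = I_K$, which would immediately give $d_M^2(c_1,c_2) = (x_1-x_2)^T \Sigma_x^{-1}(x_1-x_2) = d_{M_x}^2(x_1,x_2)$.

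The key step, and the only one requiring real care, is the identity $A^{\dagger} A = I_K$. This is where the rank-$K$ hypothesis on $A$ (with $m \ge K$) is essential: since $A$ has full column rank, its Moore--Penrose pseudo-inverse is the left inverse $A^{\dagger} = (A^T A)^{-1} A^T$, and $A^T A$ is genuinely invertible because it is $K\times K$ of rank $K$. Then $A^{\dagger} A = (A^T A)^{-1} A^T A = I_K$ follows directly. I would state this as a short lemma or simply invoke it inline, perhaps with a pointer to Appendix A where the form of $\Sigma^{\dagger}$ is derived, since the same factorization of $A$ via its thin SVD underlies both facts.

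I expect the main obstacle — really the only subtlety — to be making sure the pseudo-inverse bookkeeping is consistent: one must use the \emph{same} convention for $A^{\dagger}$ in \eqref{eq:cov_c_inv} as in the computation above, and verify that \eqref{eq:cov_c_inv} is indeed the Moore--Penrose pseudo-inverse of $\Sigma = A\Sigma_x A^T$ (not merely some generalized inverse), so that $d_M^2$ is well defined independently of $c_1,c_2$ lying in the column space of $A$. Granting the appendix's derivation of \eqref{eq:cov_c_inv}, the proof is a two-line substitution; the conceptual content is entirely in the full-column-rank cancellation $A^{\dagger}A = I_K$, and no analytic estimates or limiting arguments are needed.
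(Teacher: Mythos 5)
Your proposal is correct and follows essentially the same route as the paper: substitute $c_i = Ax_i$ and $\Sigma^{\dagger}=A^{\dagger^{T}}\Sigma_{x}^{-1}A^{\dagger}$ into the definition and cancel. The only difference is that you make explicit the step $A^{\dagger}A=I_K$ (via full column rank of $A$), which the paper's two-line computation leaves implicit.
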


\begin{proof}
	
	Substituting \eqref{eq:cov_c_inv} into \eqref{eq:mahalanobis_c}:
	\begin{align}
	d_{M}^{2}\left(c_{1},c_{2}\right)
	= & \left(Ax_{1}-Ax_{2}\right)^{T}A^{\dagger^{T}}\Sigma_{x}^{-1}A^{\dagger}\left(Ax_{1}-Ax_{2}\right)\\ \nonumber
	= & \left(x_{1}-x_{2}\right)^{T}\Sigma_{x}^{-1}\left(x_{1}-x_{2}\right)\\ \nonumber
	= & d_{M_{x}}^{2}\left(x_{1},x_{2}\right)
	\end{align}
\end{proof}

A consequence of Theorem \ref{mahalnobis_theorem} is the statement in the next 
corollary.

\begin{corollary}
	If we assume that the coordinates of $x$ are uncorrelated
	with unit variance, i.e. $\Sigma_{x}=I$, then
	\begin{equation}
	d_{M}^{2}\left(c_{1},c_{2}\right) = \norm{x_{1}-x_{2}}_{2}^{2}.
	\end{equation}
\end{corollary}
In other words, we can recover the Euclidean distance in the hidden space by computing the Mahalanobis distance in the observed space.

According to the model assumptions, the linear transformation $A$ is unknown and therefore the computation of $d_{M}^{2}\left(c_{1},c_{2}\right)$ requires an estimation of the covariance $\Sigma$. A common estimator of $\Sigma$ is the sample covariance $\Sigma_D = \frac{1}{n-1}\mathbf{DD}^T$. Since the Mahalanobis distance requires the pseudo-inverse of the covariance matrix, we use \ac{PCA} to find the $K$ principal directions corresponding to the rank of the true covariance matrix $\Sigma$. The \ac{EVD} of the sample covariance is given by:
\begin{equation}
\widehat{\Sigma}=U_{K}\Lambda_{K}U_{K}^{T},\label{eq:conventional_cov_rank_k}
\end{equation}
where the columns of $U_{K}\in\mathbb{R}^{m\times K}$ are the $K$
principal directions of $\mathbf{D}$, and $\Lambda_{K}$ is a diagonal matrix
consisting of the variance of each of the principal directions. Hence, the estimated Mahalanobis distance using \ac{PCA} is given by:
\begin{align}
\label{eq:global_mahb_dist}
{\hat{d}}_{M}^{2}\left(c_{1},c_{2}\right) & =\left(c_{1}-c_{2}\right)^{T}\widehat{\Sigma}^{\dagger}\left(c_{1}-c_{2}\right)\\ \nonumber
& =\left(c_{1}-c_{2}\right)^{T}\left(U_{K}\Lambda_{K}^{-1}U_{K}^{T}\right)\left(c_{1}-c_{2}\right)
\end{align}
Note that since the pseudo-inverse $\widehat{\Sigma}^{\dagger}$ is semi-positive definite (and symmetric) it can be written as $\widehat{\Sigma}^{\dagger}=WW^T$, where $W=U_K\Lambda_{K}^{-1/2}$. Substituting it into \eqref{eq:global_mahb_dist} gives:
\begin{align}
\label{eq:global_mahb_dist_pca}
{\hat{d}}_{M}^{2}\left(c_{1},c_{2}\right) & =\left(c_{1}-c_{2}\right)^{T}WW^T\left(c_{1}-c_{2}\right)\\ \nonumber
& =\left(W^Tc_{1}-W^Tc_{2}\right)^{T}\left(W^Tc_{1}-W^Tc_{2}\right)\\ \nonumber
& =\norm{W^Tc_1 - W^Tc_2}_2^2
\end{align}
Therefore, the Mahalanobis distance between $c_1$ and $c_2$ is equivalent to the Euclidean distance between their projections on the subspace spanned by the $K$ principal directions. 

\subsection{The Relation Between PCA and K-means Clustering}
\label{sec:PCA_Kmeans}
Following \cite{ding2004k}, we present the
relation between k-means clustering and \ac{PCA}. Here, we consider a set of samples $\left\{ r_{j}\right\} _{j=1}^{m}$ which are the rows of the matrix $\mathbf{D}$.
Applying k-means to $\left\{ r_{j}\right\} _{j=1}^{m}$ gives rise to $K+1$ clusters of rows, which minimize the following cost function:
\begin{equation}
J_{K}=\sum_{k=1}^{K+1}\sum_{j\in G_{k}}\left(r_{j}-\mu_{k}\right)^{2}
\label{eq:kmeans_cost}
\end{equation}
where $G_{k}$ is the set of indices of samples belonging to the $k$th cluster, and $\mu_{k}$ is the centroid of the $k$th cluster. 
The clustering minimizing \eqref{eq:kmeans_cost} is denoted by $H\in\mathbb{R}^{m\times\left(K+1\right)}$,
where the $k$th column of $H$ is an indicator vector representing
which rows belong to the $k$th cluster:

\begin{equation}
\label{eq:cluster_indicator}
H_{j,k}=\begin{cases}
\begin{array}{c}
1/\sqrt{m_k}\\
0
\end{array} & \begin{array}{c}
j\in G_{k}\\
else
\end{array}\end{cases}
\end{equation}
and $m_{k}$ is the size of $k$th cluster. Recall that $U_K$ are the $K$ principal directions of $\mathbf{D}$ used in the previous section, the relation between the matrix $H$ and $U_K$ is given by the following theorem.

\begin{theorem}
	\label{HT_theorem}
	Let $U_K$ be the $K$ principal directions of $\mathbf{D}$. The relation between $U_K$ and the clustering denoted by $H$ is given by:
	\begin{equation}
	\label{eq:U_HT}
	U_K=HT
	\end{equation}
	where $T$ is $\left(K+1\right)\times K$ matrix of rank $K$.	
	
\end{theorem}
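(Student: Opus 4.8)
The plan is to follow the spectral relaxation of $k$-means clustering due to Ding and He \cite{ding2004k}. First I would rewrite the cost \eqref{eq:kmeans_cost} in terms of the normalized indicator matrix $H$ of \eqref{eq:cluster_indicator}. Using $\mu_k=\frac{1}{m_k}\sum_{j\in G_k}r_j$ one has, for each cluster, $\sum_{j\in G_k}\norm{r_j-\mu_k}_2^2=\sum_{j\in G_k}\norm{r_j}_2^2-m_k\norm{\mu_k}_2^2$, so summing over $k$ yields
\begin{equation}
J_K=\operatorname{Tr}\left(\mathbf{D}\mathbf{D}^T\right)-\operatorname{Tr}\left(H^T\mathbf{D}\mathbf{D}^TH\right).
\end{equation}
Since the first term does not depend on the clustering, minimizing $J_K$ over indicator matrices $H$ is the same as maximizing $\operatorname{Tr}(H^T\mathbf{D}\mathbf{D}^TH)$. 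Every $H$ of the form \eqref{eq:cluster_indicator} satisfies $H^TH=I_{K+1}$, and since each row of $H$ has a single nonzero entry, $\sum_{k=1}^{K+1}\sqrt{m_k}\,H_{:,k}=\mathbf{1}_m$, so the all-ones vector $\mathbf{1}_m$ always lies in the column space of $H$.

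Next I would pass to the continuous relaxation, keeping only $H^TH=I_{K+1}$ and $\mathbf{1}_m\in\operatorname{col}(H)$ and dropping the combinatorial structure. Recall that the columns of $U_K$ are the top $K$ eigenvectors of $\mathbf{D}\mathbf{D}^T$ (equivalently of the sample covariance $\widehat{\Sigma}$), i.e. the $K$ leading principal directions. Assuming $\mathbf{D}$ has been centered so that $\mathbf{D}^T\mathbf{1}_m=0$ --- equivalently, $\mathbf{1}_m$ is an eigenvector of $\mathbf{D}\mathbf{D}^T$ with eigenvalue $0$ --- the trace-maximization decouples: one of the $K+1$ orthonormal columns is forced to be $\mathbf{1}_m/\sqrt{m}$, and by the Ky Fan / Rayleigh--Ritz characterization the remaining $K$ orthonormal columns, being orthogonal to $\mathbf{1}_m$, must span the top-$K$ eigenspace of $\mathbf{D}\mathbf{D}^T$, which is exactly $\operatorname{col}(U_K)$. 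Hence the optimal $H$ has $\operatorname{col}(H)=\operatorname{span}(\mathbf{1}_m)\oplus\operatorname{col}(U_K)$, and in particular $\operatorname{col}(U_K)\subseteq\operatorname{col}(H)$.

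A short linear-algebra step then produces $T$. Since $H$ has orthonormal columns it has full column rank $K+1$, so every vector in $\operatorname{col}(H)$ is written uniquely as $Ht$ with $t\in\mathbb{R}^{K+1}$; applying this to the $K$ columns of $U_K$ and collecting the coefficient vectors gives $T=H^TU_K\in\mathbb{R}^{(K+1)\times K}$ with $U_K=HT$. Since $H$ is injective, $\operatorname{rank}(T)=\operatorname{rank}(HT)=\operatorname{rank}(U_K)=K$, which completes the argument.

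I expect the relaxation to be the main obstacle, specifically the bookkeeping around the constant direction: one must argue that the exact minimizer of $J_K$ inherits the column space of the relaxed optimizer, and that the extra dimension of $\operatorname{col}(H)$ is precisely $\operatorname{span}(\mathbf{1}_m)$ --- this is what reconciles $K+1$ clusters with $K$ principal directions, and is also where the centering hypothesis on $\mathbf{D}$ enters. The trace identity of the first step and the Ky Fan argument of the second are routine once this structure is in place.
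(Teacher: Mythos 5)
Your proposal reconstructs from scratch the result that the paper simply cites: the paper's entire proof is to set $Q=\left[\frac{1}{\sqrt{m}}\mathbf{1}\;\; U_K\right]$, invoke Theorem 3.3 of Ding and He to get an orthonormal $\widehat{T}$ with $Q=H\widehat{T}$, and then delete the first column of $\widehat{T}$ to obtain $T$ (your $T=H^TU_K$ is the same matrix, since $H^TH=I$ gives $\widehat{T}=H^TQ$). Your trace identity $J_K=\operatorname{Tr}(\mathbf{D}\mathbf{D}^T)-\operatorname{Tr}(H^T\mathbf{D}\mathbf{D}^TH)$, the observation that $\mathbf{1}_m\in\operatorname{col}(H)$, the Ky Fan step, and the final rank argument are all correct and are exactly the content of the cited theorem, so as a matter of mathematical substance you and the paper are walking the same road; you are just proving the signpost rather than pointing at it. You also make explicit a hypothesis the paper leaves silent, namely the centering $\mathbf{D}^T\mathbf{1}_m=0$, which is genuinely needed for the constant direction to decouple from the top-$K$ eigenspace.

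The gap is the one you flag yourself and then do not close: the Ky Fan argument characterizes the optimizer of the \emph{relaxed} problem (orthonormal $H$ with $\mathbf{1}_m$ in its column space), and that optimizer is in general not attained by any discrete indicator matrix of the form \eqref{eq:cluster_indicator}. For the actual k-means minimizer $H$, the statement $\operatorname{col}(U_K)\subseteq\operatorname{col}(H)$ says that every principal direction is exactly piecewise constant on the clusters, which is false for generic data; so the step ``the exact minimizer of $J_K$ inherits the column space of the relaxed optimizer'' cannot be argued in general, it must be assumed (it holds exactly only when the data genuinely has the block structure that makes the discrete and relaxed optima coincide, which is the regime the paper works in). You should be aware that the paper's proof has the identical weakness --- it reads Ding and He's relaxation result as if it delivered an exact identity $Q=H\widehat{T}$ for the combinatorial $H$ --- so your write-up is, if anything, more honest about where the idealization enters; but as a self-contained proof of the theorem as literally stated, neither your argument nor the paper's closes this step, and yours leaves it as an explicitly acknowledged hole rather than a citation.
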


\begin{proof}
	We define the matrix:
	\begin{equation}
	\label{eq:Q_def}
	Q=\left[\begin{array}{cc}
	\frac{1}{\sqrt{m}}\mathbf{1} & U_{K}\end{array}\right]
	\end{equation}
	where $\mathbf{1}$ is an all ones vector. According to Theorem 3.3 in \cite{ding2004k} there exists an orthonormal transformation $\widehat{T}\mathbb{\in R}^{\left(K+1\right)\times\left(K+1\right)}$, which satisfies:
	\begin{equation}
	Q=H\widehat{T}\label{eq:Q_HT}
	\end{equation}
	By removing the first column in $\widehat{T}$ we define:
	\begin{equation}
	\label{eq:T_def}
	T=\widehat{T}\left[\begin{array}{c} 0\\ I_{K\times K} \end{array}\right]
	\end{equation}
	which is a matrix of rank $K$ from the definition of $\widehat{T}$, satisfying:
	\begin{equation}
	U_K=HT
	\end{equation}

\end{proof}

Theorem	\ref{HT_theorem} implies that a linear transformation maps the clustering of the rows to the principal direction of the columns. Note that the first column of $\widehat{T}$ is:
\begin{equation}
t_{1}=\left(\sqrt{m_{1}/m},\sqrt{m_{2}/m},...,\sqrt{m_{K+1}/m}\right)
\end{equation}
since it maps the clusters to the first column of $Q$ which is a constant vector:
\begin{align}
q_1(j) &= \sum_k{H_{j,k}t_1(j)} \\ \nonumber
&= \frac{1}{\sqrt{m_l}}\sqrt{\frac{m_l}{m}} \\ \nonumber
&= \frac{1}{\sqrt{m}} \quad  \forall{j=1,2,...,m} \quad, j\in G_{l}
\end{align}
Since we are interested only in the remaining columns of $Q$, which are the principal directions, we can ignore $t_{1}$ as presented in the proof of Theorem \ref{HT_theorem}.

\section{Global Mahalanobis Distance With Clustering \label{sec:Global-Mahalanobis}}
\label{sec:global_M}
\subsection{Derivation}
\label{sec:global_M_derivation}

The Mahalanobis distance presented in Section \ref{subsec:mah_pca} is defined with the pseudo-inverse of the covariance matrix $\widehat{\Sigma}^{\dagger}$. For a finite number of samples (columns) $n$ of dimension $m$ (the number of rows), the estimation of the principal directions $U_K$ becomes more difficult when the dimension $m$ increases. As a result, $\widehat{\Sigma}^{\dagger}$ is less accurate, leading to inaccurate Mahalanobis distance between samples (columns).

This conventional approach does not take into account the structure of the rows when the distance between the columns is computed. Our observation is that based on the relation between k-means and \ac{PCA}, presented in Section \ref{sec:PCA_Kmeans}, the clustering structure of the rows can be exploited for the estimation of $U_K$.

To this end, we would like to estimate the transformation $T$ which maps clusters of rows to a set of new principal directions $\widetilde{U}_{K}$.
We start with initial $K$ principal directions $U_{K}$ computed by \ac{PCA}, i.e. the $K$ eigenvectors of the sample covariance. In addition, we have the $K+1$ k-means clusters, which are ordered in the matrix $H$ as in \eqref{eq:cluster_indicator}. Since the clusters do not overlap, the rank of $H$ is $K+1$, and the pseudo-inverse $H^{\dagger}$ exists. Multiplying \eqref{eq:U_HT} by $H^{\dagger}=\left(H^{T}H\right)^{-1}H^{T}$ give the estimated transformation: 
\begin{equation}
\label{eq:T_trans}
T=\left(H^{T}H\right)^{-1}H^{T}U_K
\end{equation}
Plugging in \eqref{eq:T_trans} into \eqref{eq:U_HT} yields:
\begin{align}
\label{eq:U_tilde}
\widetilde{U}_{K} &= HT \\ \nonumber
&= H \left(H^{T}H\right)^{-1}H^{T}U_K 
\end{align}
which constitutes new principal directions by mapping $U_K$ onto an appropriate $(K+1)$-dimensional subspace.
According to \eqref{eq:cluster_indicator}, the columns of $H$ are orthonormal and therefore $H^{T}H=I$. Therefore $\widetilde{U}_{K}$
is denoted by:
\begin{equation}
\label{eq:new_pd}
\widetilde{U}_{K}=HH^{T}U_{K}
\end{equation}
To gain further insight, we examine the explicit expression
of the $(i,j)$th entry of the matrix $HH^{T}$: 
\[
\left[ HH^{T}\right] _{ij}=\begin{cases}
\begin{array}{c}
1/m_{k}\\
0
\end{array} & \begin{array}{c}
i,j\in G_{k}\\
else
\end{array}\end{cases}
\]
This implies that $\widetilde{U}_{K}$ is derived 
by averaging the coordinates of the principal directions $U_{K}$ according
to the clusters of the rows. Specifically, if the $j$th row belongs to
the cluster $G_{k}$, the $l$th principal direction is:
\begin{equation}
\label{eq:avg_pd}
\tilde{u}_{l}\left(j\right)=\frac{1}{m_{k}}\sum_{i\in G_{k}}u_{l}\left(i\right).
\end{equation}
It further implies that if the entries of each principal direction are organized according
to the clusters of the rows, the resulting principal directions are
piecewise constant. 

The new principal directions $\widetilde{U}_{k}$ can be used to establish a new estimate of the covariance matrix by:
\begin{equation}
\label{eq:new_cov}
\widetilde{\Sigma}=\widetilde{U}_{K}\Lambda_{K}\widetilde{U}_{K}^{T}
\end{equation}
Since the new principal directions are piecewise constant, $\widetilde{\Sigma}$ exhibits
a checkerboard structure. This property will be demonstrated in the sequel. 

The Mahalanobis distance between
columns, which is informed by the clustering structure of the rows, is given by:
\begin{equation}
\label{eq:mod_Mahalanobis}
\tilde{d}_{M}^{2}\left(c_{1},c_{2}\right)=\left(c_{1}-c_{2}\right)^{T}\widetilde{\Sigma}^{\dagger}\left(c_{1}-c_{2}\right)
\end{equation}
where
\begin{equation}
\label{eq:new_cov_inv}
\widetilde{\Sigma}^{\dagger}=\widetilde{U}_{K}\Lambda_{K}^{-1}\widetilde{U}_{K}^{T}
\end{equation}

{ \color{black}
	\subsection{Optimal Solution}   \label{subsec:projected_subgradient}
	The computation of the principal directions $\widetilde{U}_{K}$ proposed in Section \ref{sec:global_M_derivation} is composed of two steps. The first step is finding the optimal principal directions ${U}_{K}$ according to the standard \ac{PCA} optimization criterion, i.e., minimizing the reconstruction error (or maximizing the variance). The second step appears in \eqref{eq:new_pd}, where the principal directions ${U}_{K}$ are mapped onto the $(K+1)$-dimensional subspace defined by $H$, which we denote by $\mathcal{S}_H$. 
	This two-step procedure only guarantees that the obtained principal directions $\left\{\tilde{u}_i\right\}_{i=1}^K$ are in the subspace $\mathcal{S}_H$,
	yet there is no guarantee that they attain the minimal reconstruction error among all sets of $K$ vectors in $\mathcal{S}_H$.
	
	To guarantee local optimality, we propose a new constrained PCA problem formulation.
	According to \eqref{eq:new_pd} and the definition of the matrix $H$ in \eqref{eq:cluster_indicator}, the subspace $\mathcal{S}_H$ is spanned by the columns of the matrix $H$, which are the indicator vectors of each cluster:
	\begin{equation}
	\label{eq:cluster_indicator_vector}
	h_k(j)=\begin{cases}
	\begin{array}{c}
	1/\sqrt{m_k}\\
	0
	\end{array} & \begin{array}{c}
	j\in G_{k}\\
	else
	\end{array}\end{cases}
	\end{equation} 
	namely, $\mathcal{S}_H=\spn\{h_1,h_2,...,h_K,h_{K+1}\}$. Let $R\left(\widetilde{U}_K\right)$ be the optimization criterion of \ac{PCA} based on the reconstruction error:
	\begin{equation}
	\label{eq:opt_pca}
	R\left(\widetilde{U}_K\right) = E\left\{\|c-\widetilde{U}_K \widetilde{U}_K^T c\|_2^2\right\}
	\end{equation} 
	We define the following constrained optimization problem:
	\begin{equation}
	\label{eq:opt_pca_constr}
	\begin{aligned}
	& \underset{\widetilde{U}_K}{\text{minimize}}
	& & R\left(\widetilde{U}_K\right) \\
	& \text{subject to}
	& & \tilde{u}_i \in \mathcal{S}_H , \; i = 1, \ldots, K
	\end{aligned}
	\end{equation}
	where we require that the principal directions $\left\{\tilde{u}_i\right\}_{i=1}^K$,
	which are the columns of $\widetilde{U}_K$, attain minimal reconstruction error and are in the subspace $\mathcal{S}_H$.
	
	This problem can be solved iteratively by the gradient  projection algorithm \cite{calamai1987projected}:\\ 
	\begin{equation}
	\label{eq:subGrad}
	\begin{aligned}
	& \text{for $l = 1,2,3,...$} \\
	& \qquad \widetilde{U}_K \leftarrow \widetilde{U}_K - \alpha_l \nabla R\left(\widetilde{U}_K\right) \\
	& \qquad \widetilde{U}_K \leftarrow HH^T \widetilde{U}_K
	\end{aligned}
	\end{equation}

	In the first stage of each iteration in \eqref{eq:subGrad} we take a step of size $\alpha_l$ in the negative direction of the gradient of $R$, which is explicitly given by:
	\begin{equation}
	\label{eq:grad_exp}
	\nabla R(\widetilde{U}_K) =-2\left(\left(I-\widetilde{U}_K \widetilde{U}_K^T\right)\Sigma + \Sigma \left(I-\widetilde{U}_K \widetilde{U}_K^T\right)\right)\widetilde{U}_K
	\end{equation}
	In the second stage we project $\left\{
	{\tilde{u}}_i\right\}_{i=1}^K$ on $\mathcal{S}_H$. In \cite{calamai1987projected}, it was shown that this method is guaranteed to converge to a stationary point.}


We conclude this section with three remarks.
First, note that if we initialize \eqref{eq:subGrad} with the principal directions $U_K$ computed by standard \ac{PCA}, the first stage does not change $U_K$ since it is the minimum point of $R$ by definition, and therefore $\nabla R(U_K) = 0$. In the second stage we get $\widetilde{U}_K = HH^T U_K$. Hence, the computation of $\widetilde{U}_K$ derived in Section \ref{sec:global_M_derivation} is equivalent to the first iteration of the optimal solution in \eqref{eq:subGrad}. Similar to the description, which appears after \eqref{eq:avg_pd}, our interpretation for \eqref{eq:subGrad} is that the optimal principal directions $\widetilde{U}_K$ are obtained by averaging the coordinates of $\widetilde{U}_K$ according to the clustering of the rows of $\mathbf{D}$ after {\em each step} toward the minimum of $R$.

Second, after the last iteration in \eqref{eq:subGrad}, the obtained principal directions are not necessarily unit vectors. 
Therefore the computation of the pseudo-inverse of the covariance matrix in \eqref{eq:new_cov_inv}, requires a normalization of each of the obtained principal directions $\widetilde{U}_K$.
Based on $\widetilde{U}_K$, the entire construction procedure of the informed distance in \eqref{eq:mod_Mahalanobis}
is summarized in Algorithm \ref{alg:Mahalanobis-distance-informed}.

Third, we use an empirical stopping criterion; the iterative algorithm stops if the update of the principal directions (the gradient) becomes smaller than a threshold, which is determined empirically in each experiment.
\begin{algorithm}
	
	\textbf{input}: Data matrix $\mathbf{D}\in\mathbb{R}^{m\times n}$
	\begin{enumerate}
		\item Start with rows $\left\{ r_{j}\right\} _{j=1}^{m}$. Apply k-means
		clustering and get $H$
		\item Find $K$ principal directions, $U_{K}=\left[u_{1}\,u_{2}\,...\,u_{K}\right]$, by taking the
		$K$ eigenvectors of $\Sigma_D$ with the largest eigenvalues $\left\{ \lambda_{i}\right\} _{i=1}^{K}$
		
		\item Start with $\widetilde{U}_K = U_K$. Compute new principal directions:
		\begin{equation}
		\begin{aligned}
		& \text{for $l = 1,2,3,...$} \\
		& \qquad \widetilde{U}_K \leftarrow \widetilde{U}_K - \alpha_l \nabla R\left(\widetilde{U}_K\right) \\ \nonumber
		& \qquad \widetilde{U}_K \leftarrow HH^T \widetilde{U}_K
		\end{aligned}
		\end{equation}
		where:
		\begin{equation}
		\nabla R(\widetilde{U}_K) =-2\left(\left(I-\widetilde{U}_K \widetilde{U}_K^T\right)\Sigma_D + \Sigma_D \left(I-\widetilde{U}_K \widetilde{U}_K^T\right)\right)\widetilde{U}_K \nonumber
		\end{equation}
		\item Normalize the principal directions $\widetilde{U}_K$
		\item Define $\Lambda_{K}^{-1}=diag\left\{ \lambda_{1}^{-1},\lambda_{2}^{-1},...,\lambda_{K}^{-1}\right\} $,
		and the distance between columns is given by:
		\[
		\tilde{d}_{M}^{2}\left(c_{1},c_{2}\right)=\left(c_{1}-c_{2}\right)^{T}\widetilde{U}_{K}\Lambda_{K}^{-1}\widetilde{U}_{K}^{T}\left(c_{1}-c_{2}\right)
		\]
	\end{enumerate}
	\caption{Global Mahalanobis Distance Informed by Clustering \label{alg:Mahalanobis-distance-informed}}
\end{algorithm}

\subsection{Example: Recovering Distances in a Hidden Space  \label{subsec:Global-exp}}

The following example demonstrates the property of Mahalanobis distance
presented in Section \ref{subsec:mah_pca}: the invariance to a linear transformation and the ability to recover distances in a hidden space.
We assume that samples in a hidden space $\mathcal{X}$ are distributed uniformly
in the unit square $x\sim U\left[0,1\right]\times\left[0,1\right]$.
The observable data samples are given by a linear function of $x$:
\begin{equation*}
c=Ax,
\end{equation*}
The matrix $A\in\mathbb{R^{\text{1200\ensuremath{\times}2}}}$ consists of three sub-matrices:
\begin{eqnarray*}
	A & = & \left[\begin{array}{c}
		A^{(1)}\\
		A^{(2)}\\
		A^{(3)}
	\end{array}\right]
\end{eqnarray*}
where $A^{(i)}\in\mathbb{R}^{400\times2}$. The rows of each of the sub-matrices $A^{(i)}$
are realizations of Gaussian random vectors with covariance $\Sigma_A=0.01 I$
and the following respective means:
\begin{eqnarray*}
	\mu_{1} & = & \left[\begin{array}{c}
		1/3\\
		1
	\end{array}\right]\,\,\,,\,\,\,\mu_{2}=\left[\begin{array}{c}
		1/3\\
		-1
	\end{array}\right]\,\,\,,\,\,\,\mu_{3}=\left[\begin{array}{c}
		-1\\
		-3
	\end{array}\right]
\end{eqnarray*}
Note that once realized, the matrix $A$ is maintained fixed through out the experiment. The structure of the matrix $A$ was chosen such that coordinates of the observable samples $c_i$ belong to three different clusters.
Figure \ref{fig:exp1_data_C} presents $\mathbf{D} \in \mathbb{R}^{1200 \times 500}$ which consists of $500$ realizations of $c$. It can be observed that the rows of $\mathbf{D}$ have a clear structure of three clusters.\\
\begin{figure}
	\centering{}\includegraphics[width=0.33\paperwidth,height=0.22\paperheight]{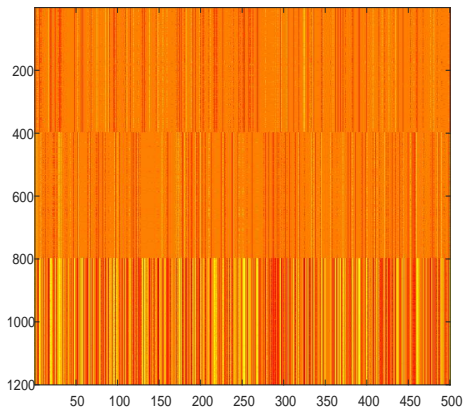}\caption{The matrix $\mathbf{D} \in \mathbb{R}^{1200\times 500}$. Each column of $\mathbf{D}$ is a realization of the random variable $c$. The rows have a clear structure of three clusters. \label{fig:exp1_data_C}}
\end{figure}
The conventional Mahalanobis distance and the distance proposed in Algorithm \ref{alg:Mahalanobis-distance-informed}, $\hat{d}_M$ and  $\tilde{d}_{M}$,
are computed from $50$ columns of $\mathbf{D}$ and compared to the Euclidean distance in the hidden space $\mathcal{X}$. As can be seen in Fig. \ref{fig:exp1_compare_dist}, both methods demonstrate accurate recovery of the Euclidean distance in $\mathcal{X}$. Particularly, both attain $0.99$ correlation with the Euclidean distance between the unobservable data samples.
\begin{figure}
	\begin{centering}
		\subfloat[]{
			\centering{}\includegraphics[width=0.35\paperwidth,height=0.2\paperheight]{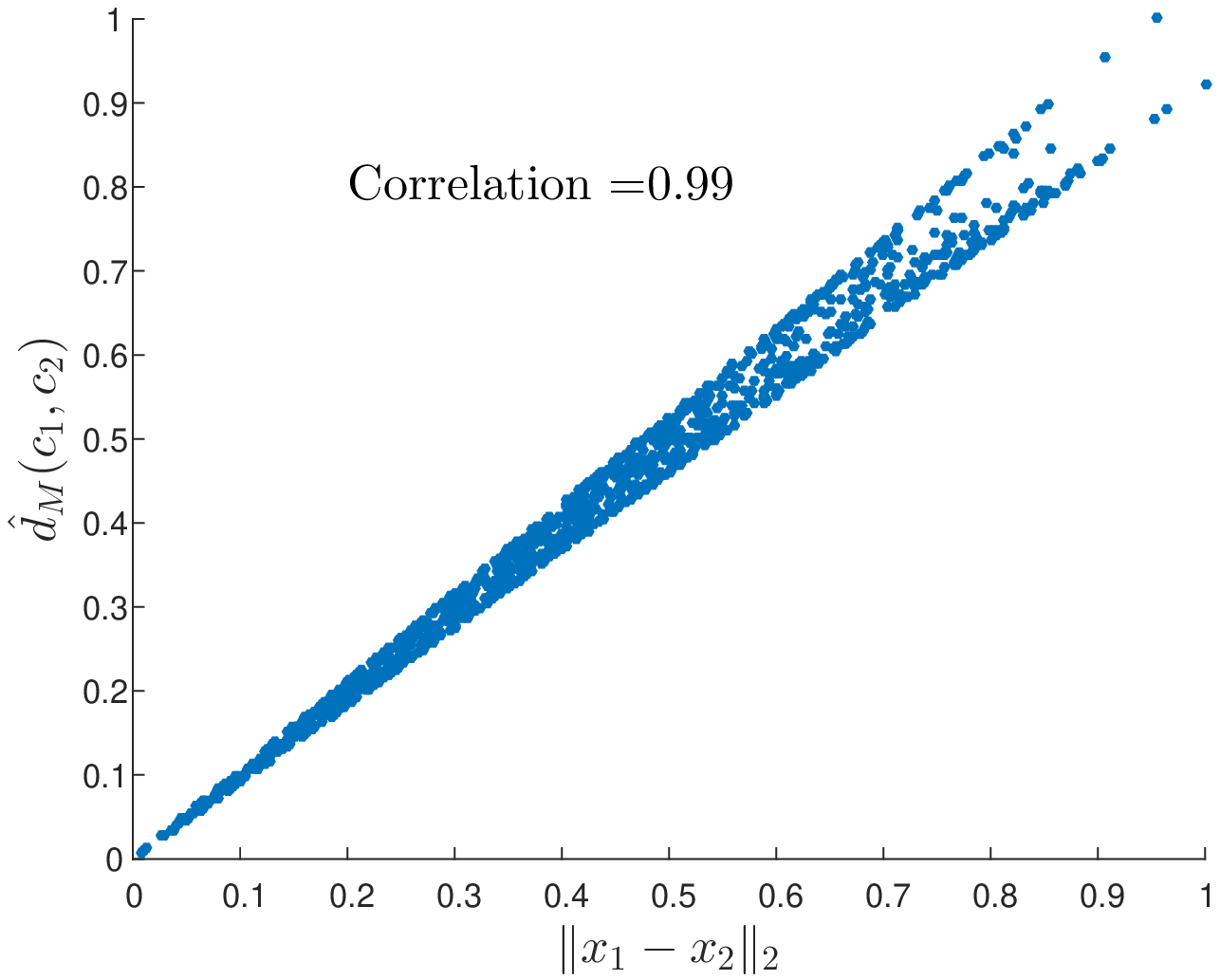}}
		\subfloat[]{
			\centering{}\includegraphics[width=0.35\paperwidth,height=0.2\paperheight]{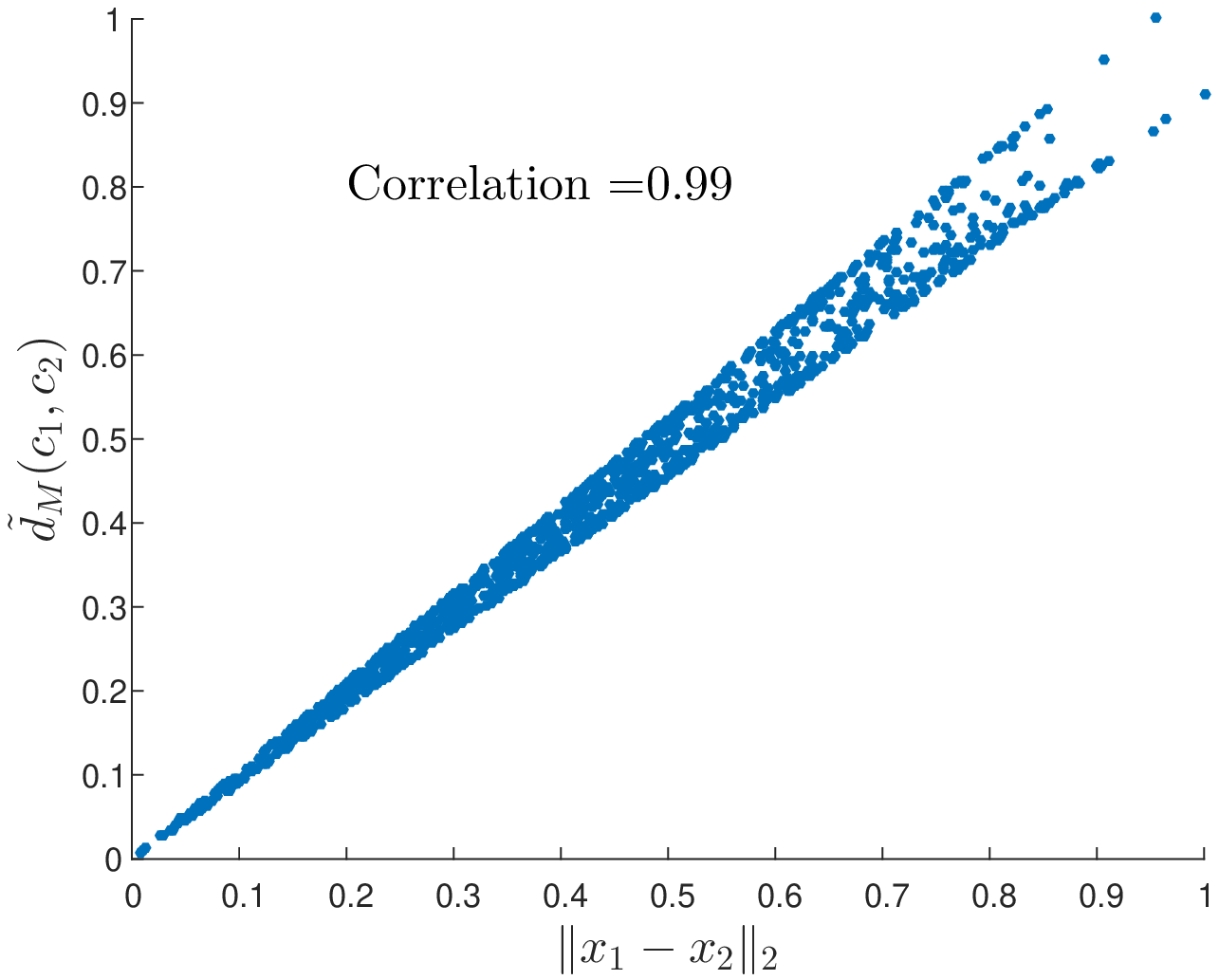}}
		\par\end{centering}
	\centering{}\caption{Comparison between the Euclidean distance in the hidden space $\mathcal{X}$ and the Mahalanobis distance computed by: (a) the conventional approach using \ac{PCA} ($\hat{d}_M$), (b) the proposed approach ($\tilde{d}_{M}$). Both obtain correlation of 0.99 with the original distance. \label{fig:exp1_compare_dist}}
\end{figure}

To examine the recovered distances in a different way, we find a parametrization of $x$ using an algorithm that uses a kernel built from small distances called diffusion maps \cite{Coifman2006}.
Following is a brief review of the main steps of diffusion maps. 
We compute an affinity matrix with the following Gaussian kernel:
\begin{equation}
\label{eq:exp1_kernel}
W_{i,j}=e^{-d^{2}\left(c_{i},c_{j}\right)/\varepsilon}
\end{equation}
Note that $d\left(c_{i},c_{j}\right)$ is any distance between the samples $c_1$ and $c_2$. The affinity matrix is normalized by the diagonal matrix $S$:
\begin{equation}
P=S^{-1}W
\end{equation}
where:
\begin{equation}
S_{i,i}=\sum_{j=1}^{n}W_{i,j}
\end{equation}
The diffusion maps embedding of the point $c_{i}$ is defined by:
\begin{equation}
\label{eq:diff_maps}
c_{i}\rightarrow\tilde{c}_{i}=\left(\lambda_{1}\phi_{1}\left(i\right),\lambda_{2}\phi_{2}\left(i\right)\right),
\end{equation}
where $\phi_{1}$ and $\phi_{2}$ are the eigenvectors of $P$ corresponding to the largest eigenvalues $\lambda_{1}$ and $\lambda_{2}$, respectively (ignoring the trivial eigenvalue $\lambda_{0}=1$).
It was proven in \cite{Coifman2006} that the operator:
\begin{equation}
L=\frac{I-P}{\varepsilon}
\end{equation}
is an approximation of a continuous Laplace operator defined in the unit square, when $x$ is distributed uniformly, as in this example. Therefore, the parametrization of $x=\left(x\left(1\right),x\left(2\right)\right)$ in \eqref{eq:diff_maps} is approximated by the two first eigenfunctions
of the Laplace operator:
\begin{eqnarray*}
	\phi_{1}\left(x\left(1\right),x\left(2\right)\right) & = & \cos\left(\pi x\left(1\right)\right)\\
	\phi_{2}\left(x\left(1\right),x\left(2\right)\right) & = & \cos\left(\pi x\left(2\right)\right)
\end{eqnarray*}
Figure \ref{fig:exp1_data} presents the data points in the original space
$\mathcal{X}$ colored by the first eigenvector. We compute the affinity matrix defined in 
\eqref{eq:exp1_kernel} for the following distances: Euclidean, $\hat{d}_{M}$ and $\tilde{d}_{M}$.
The embedding obtained by each of the distances is presented in Fig.
\ref{fig:exp1_embd_euc}, \ref{fig:exp1_embd_mah} and \ref{fig:exp1_embd_mah_inf},
where the points are colored according to $x\left(1\right)$. It can be observed that the embeddings computed by both
$\hat{d}_{M}$ and $\tilde{d}_{M}$ are accurate representations for the data in the hidden
space, and the columns are mapped into a shape homeomorphic to the unit square. Conversely, using the Euclidean distance between the columns, as demonstrated in Figure \ref{fig:exp1_embd_euc}, gave rise to a different embedding, which does not represent the hidden space. 
This example demonstrates the ability to recover distances in a hidden space by computing the Mahalanobis distance, either with the conventional approach (using $\hat{d}_M$) or with the one proposed in Algorithm \ref{alg:Mahalanobis-distance-informed} (using $\tilde{d}_M$). The advantage of the proposed distance $\tilde{d}_{M}$ will be presented in Section \ref{sec:toy_exp}.

\begin{figure}
	\begin{centering}
		\subfloat[\label{fig:exp1_data}]{\centering{}\includegraphics[width=0.3\paperwidth,height=0.15\paperheight]{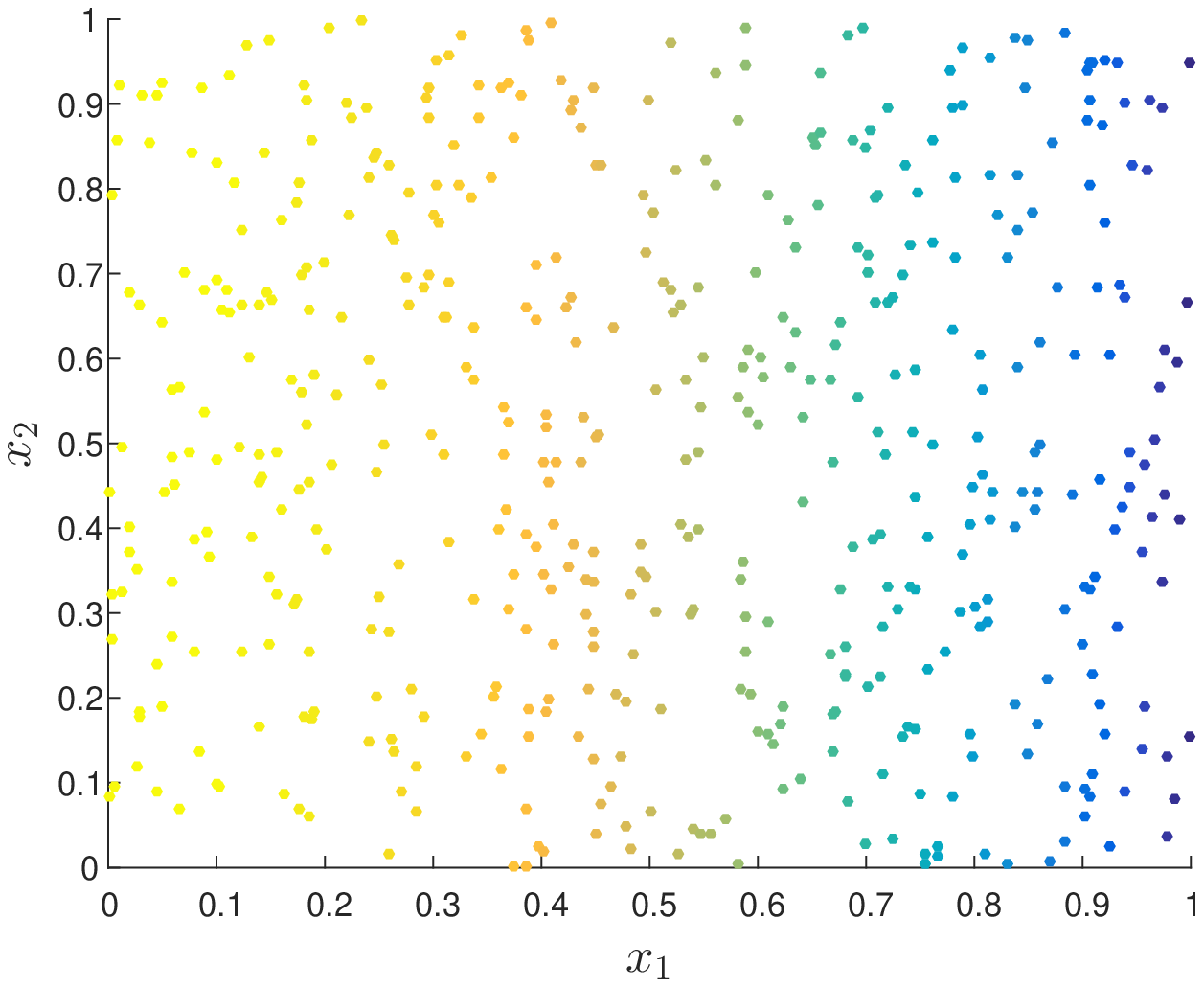}}
		\subfloat[\label{fig:exp1_embd_euc}]{\centering{}\includegraphics[width=0.3\paperwidth,height=0.15\paperheight]{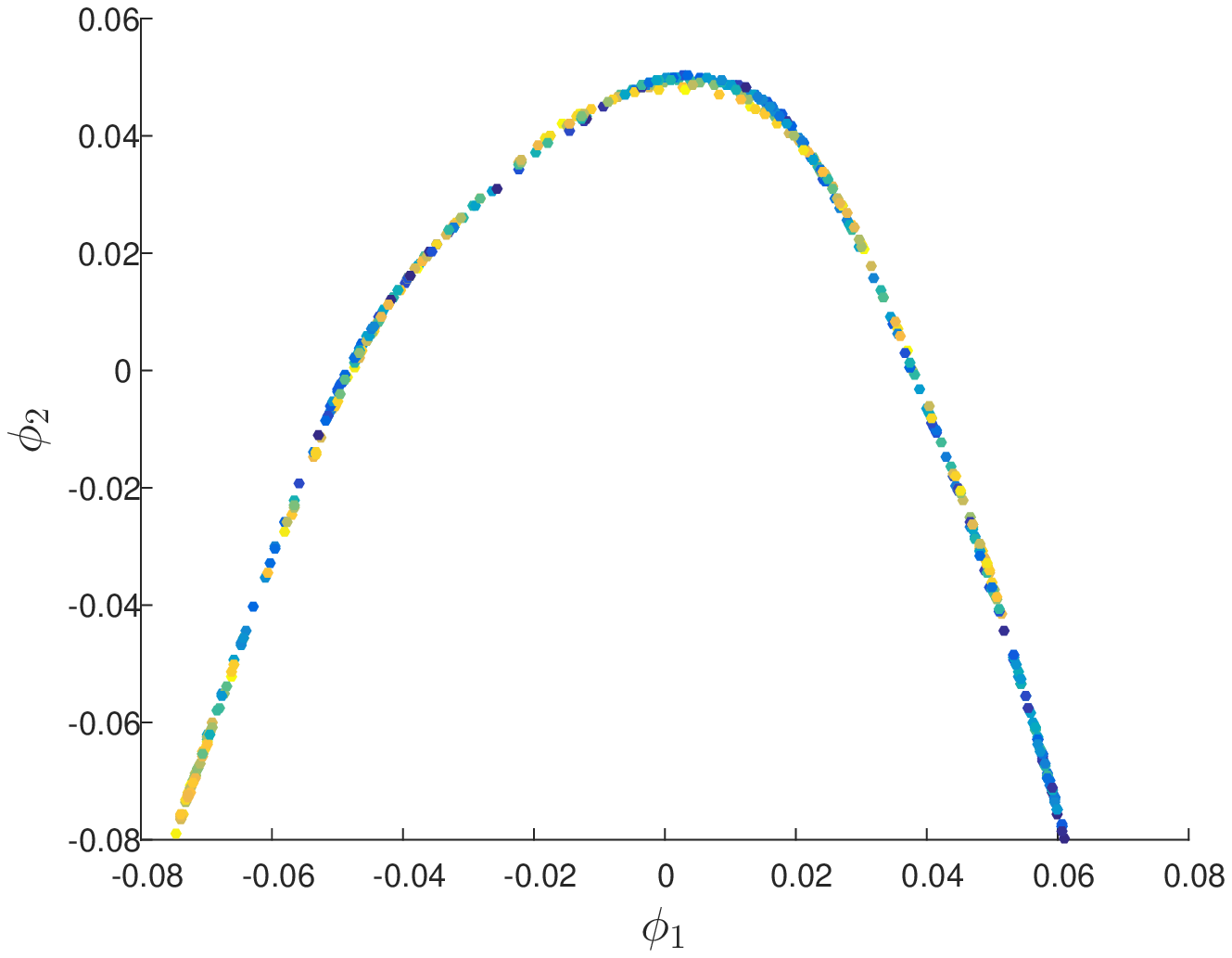}}
		\par\end{centering}
	\centering{}\subfloat[\label{fig:exp1_embd_mah}]{\centering{}\includegraphics[width=0.3\paperwidth,height=0.15\paperheight]{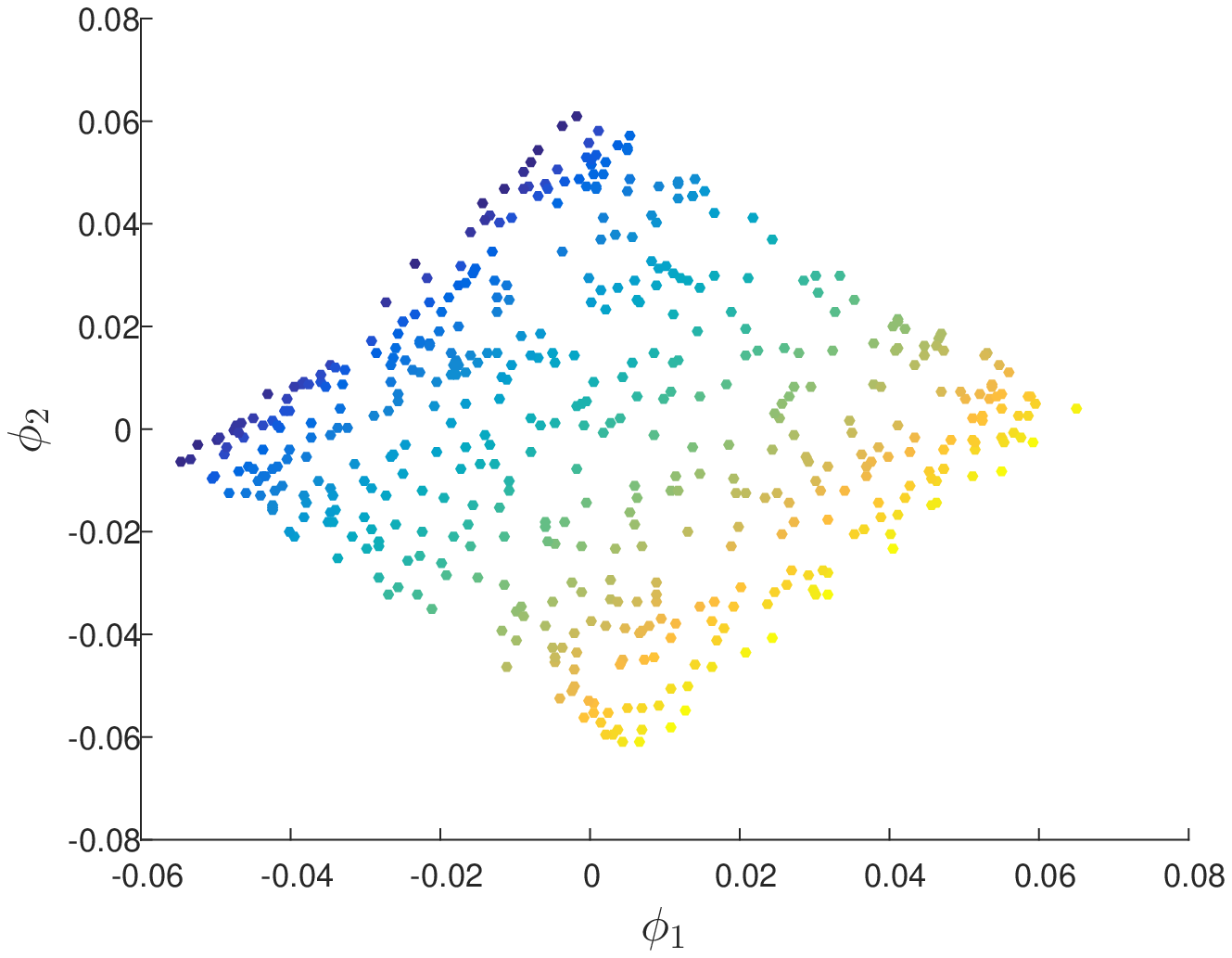}}
	\subfloat[\label{fig:exp1_embd_mah_inf}]{\centering{}\includegraphics[width=0.3\paperwidth,height=0.15\paperheight]{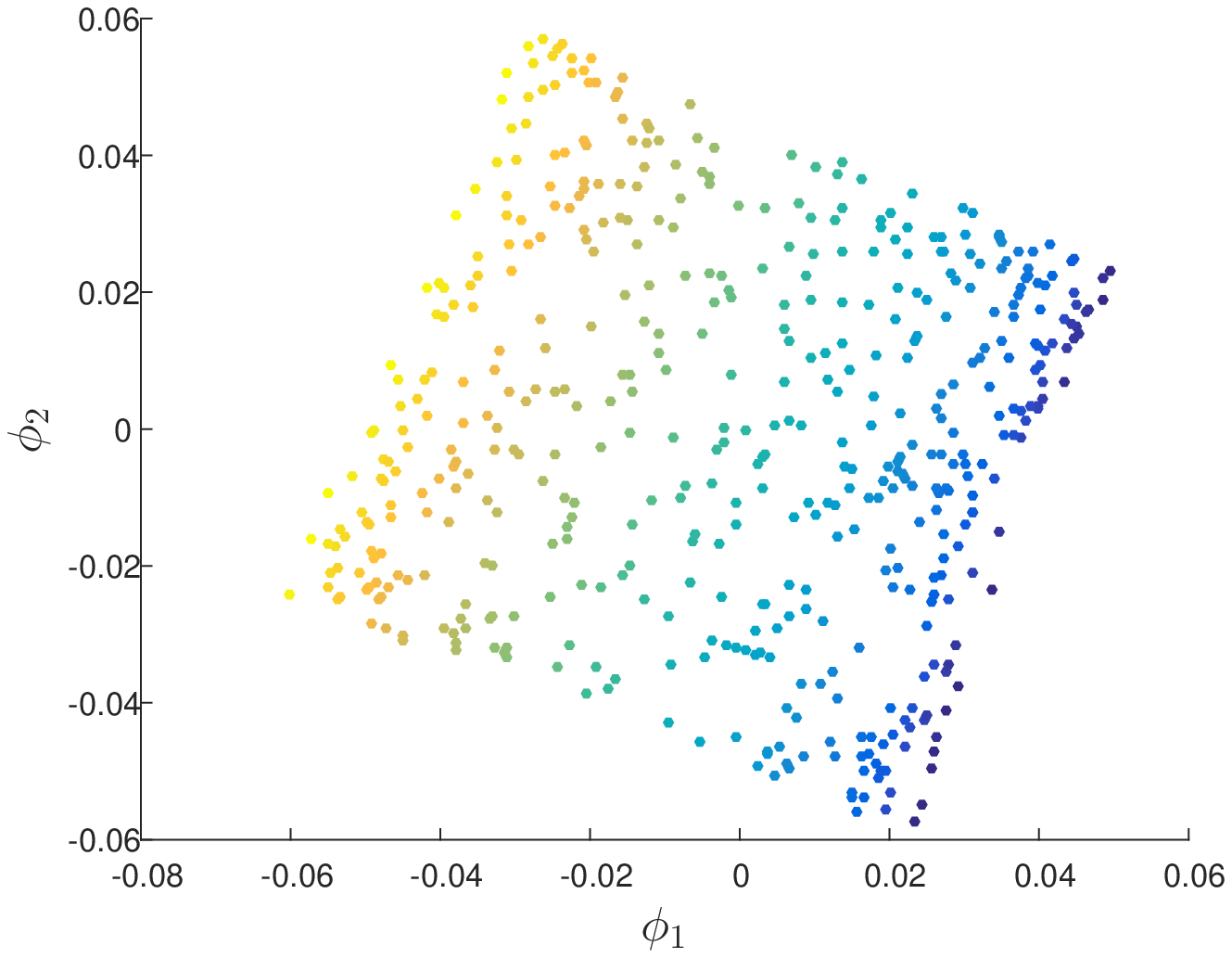}}
	\caption{(a) Data points distribute uniformly in the hidden space $\mathcal{X}$  colored by the first eigenfunction $\phi_1$. The embedding of $\left\{ c_{i}\right\} _{i=1}^{n}$ computed by: Euclidean distance, Mahalanobis distance $\hat{d}_{M}$ and the informed distance $\tilde{d}_{M}$ are presented in (b), (c) and (d).}
\end{figure}

\section{Local Mahalanobis Distance With Clustering}
\label{sec:local_M}
\subsection{Local Mahalanobis Distance}
\label{sec:local_M_pca}
The advantage of using Mahalanobis distance holds as longs as the data arises from the model \eqref{eq:linear_t}, where the random vector $c$ is a linear function of a hidden variable $x$. In this section, based on \cite{Singer2008}, we present an extension to the nonlinear case, i.e:
\begin{equation}
c=f\left(x\right)
\end{equation}
where $f$ is a non-linear function. For simplicity, in this section, we assume that $\Sigma_x=I$. This assumption is common practice, for example, in probabilistic \ac{PCA} \cite{tipping1999probabilistic}. Under this assumption, the Mahalanobis distance between points in the hidden space $d_{M_x}$ defined in \eqref{eq:mahalanobis} is Euclidean. 

To deal with the nonlinearity introduced by $f$, an extension of the Mahalanobis distance $d_{M}$ in \eqref{eq:mahalanobis_c} was presented in \cite{Singer2008}. 
Since $f$ is a non-linear function the covariance of the random variable $c$ around each point is different. We denote the local covariance around the column $c_i$ by $\Sigma_i$.
To estimate the local covariance we define the matrix $\mathbf{D}^{(i)}$ which consists of all the neighbors of the column $c_i$, namely, all the columns $c_j$ in $\mathbf{D}$ such that $\ \|c_i - c_j\| < \delta$ for some $\delta>0$.
Let $U_{d}^{\left(i\right)}$ be the principal directions of $\mathbf{D}^{(i)}$ (obtained from the \ac{EVD} of $\mathbf{D}^{(i)}$), and ${\Lambda_{d}^{(i)}}$ is a diagonal matrix
containing the variance in each of the principal directions. The estimated local covariance is given by:
\begin{equation}
\label{eq:est_local}
\widehat{\Sigma}_i = U_{d}^{(i)}{\Lambda_{d}^{(i)}}{U_{d}^{(i)}}^{T}
\end{equation}
For any two columns $c_1$ and $c_2$, a modified (local) Mahalanobis distance is defined by:
\begin{equation}
\label{eq:d_LM}
d_{LM}^2(c_1,c_2) = \frac{1}{2}\left(c_{1}-c_{2}\right)^{T}\left(\widehat{\Sigma}_1^{\dagger} + \widehat{\Sigma}_2^{\dagger}\right)\left(c_{1}-c_{2}\right)
\end{equation}
%

If $f$ is invertible, by denoting $g=f^{-1}$ and the two hidden samples in $\mathcal{X}$ corresponding to $c_1$ and $c_2$, namely, $x_1=g(c_1)$ and $x_2=g(c_2)$, the Euclidean distance between $x_{1}$ and $x_{2}$ is given by:
\begin{equation}
\label{eq:x1x2dist}
\|x_{1}-x_{2}\|^{2}=\frac{1}{2}\left(c_{1}-c_{2}\right)^{T}\left(\left(J_1J_1^{T}\right)^{-1}+\left(J_2J_2^{T}\right)^{-1}\right)\left(c_{1}-c_{2}\right)+O\left(\|c_{1}-c_{2}\|^{4}\right)
\end{equation}
where $J_i$ is the Jacobian of $f$ at $x_i$, $i=\{1,2\}$.
It was proven in \cite{Singer2008} that under the assumption that $f$ is a smooth map between two smooth manifolds: 
\begin{equation}
f:\,\,\,\mathcal{M}_{x}\mapsto\mathcal{M}_{c}
\end{equation}
the local covariance $\Sigma_{i}$ is an approximation of $J_iJ_i^{T}$. Therefore, by substituting $\Sigma_{i}=J_iJ_i^{T}$ in \eqref{eq:x1x2dist}, the Euclidean distance between points $x_1$ and $x_2$ in the hidden space is given by: 
\begin{equation}
\label{eq:d_local_M}
\|x_{1}-x_{2}\|^{2}=\frac{1}{2}\left(c_{1}-c_{2}\right)^{T}\left(\Sigma_{1}^{-1}+\Sigma_{2}^{-1}\right)\left(c_{1}-c_{2}\right)+O\left(\|c_{1}-c_{2}\|^{4}\right)
\end{equation}

If the dimension of the manifold $\mathcal{M}_c$ is $d$, the data samples close to $c_{i}$ approximately lie on a $d$-dimensional
tangent space $T_{c_{i}}\mathcal{M}_{c}$. Hence, $\Sigma_{i}$ is not a full
rank matrix, so that the inverse of the local covariance matrix in \eqref{eq:d_local_M} is replaced by its pseudo-inverse:
\begin{align}
\label{eq:pinv_local}
\|x_{1}-x_{2}\|^{2} &\approx \frac{1}{2}\left(c_{1}-c_{2}\right)^{T}\left({\Sigma}_1^{\dagger} + {\Sigma}_2^{\dagger}\right)\left(c_{1}-c_{2}\right)
\end{align}
%

Using the pseudo-inverse of the estimated covariance in \eqref{eq:est_local}, we recast \eqref{eq:pinv_local} as:
\begin{align}
\label{eq:d_M_pca}
\|x_{1}-x_{2}\|^{2} &\approx \frac{1}{2}\left(c_{1}-c_{2}\right)^{T}\left(\widehat{\Sigma}_1^{\dagger} + \widehat{\Sigma}_2^{\dagger}\right)\left(c_{1}-c_{2}\right)  \\ \nonumber
&= d_{LM}^2(c_1,c_2)
\end{align}
In other words, the local Mahalanobis distance between two columns $c_1$ and $c_2$ is an approximation of the Euclidean distance between the hidden points $x_1$ and $x_2$ in the hidden space $\mathcal{X}$.
This result implies that although the data is generated from a nonlinear transformation of the hidden samples, the distance between these samples can be computed by the modified Mahalanobis distance using the pseudo-inverse of the local covariance matrices. Based on this distance we can, for example, find a representation of the samples in the hidden space, as has been done in \cite{Singer2008,talmon2013empirical}.

\subsection{Informed Distance Derivation}

In the linear case presented in Section \ref{subsec:mah_pca}, all the available samples are assumed to be realizations from the same statistical model. Conversely, in the nonlinear case, each sample may correspond to different statistical model (each column $c_i$ corresponds to a different covariance matrix $\Sigma_{i}$).
Consequently, we consider neighborhoods of similar columns as coming from the same (local) statistical model and use them in order to estimate the local covariance matrices. 
This implies the following trade-off: on the one hand, when we use larger neighborhoods, more columns are available for the covariance estimation, thereby yielding a more accurate covariance estimation on the expense of locality. On the other hand, when we use smaller neighborhoods, fewer columns are available for the estimation, so that the covariance estimation is more local, but suffers from a larger error.

This problem can be seen as the well-known bias-variance trade-off and it is illustrated in Fig. \ref{fig:cov_on_manifold}. 
The covariance matrix $\Sigma_2$ required for the computation of the distance $d_{LM}^2(c_1,c_2)$ is estimated using different size of neighborhoods: in Fig. \ref{fig:manifold_2cov} the choice of small neighborhood respect the manifold but consists of a small number of points. In Fig. \ref{fig:manifold_2cov}, due to the manifold curvature, large neighborhood with more samples, do not respect the manifold structure.
%

\begin{figure}
	\begin{centering}
		\subfloat[\label{fig:manifold_2cov}]{\centering{}\includegraphics[width=0.3\paperwidth,height=0.17\paperheight]{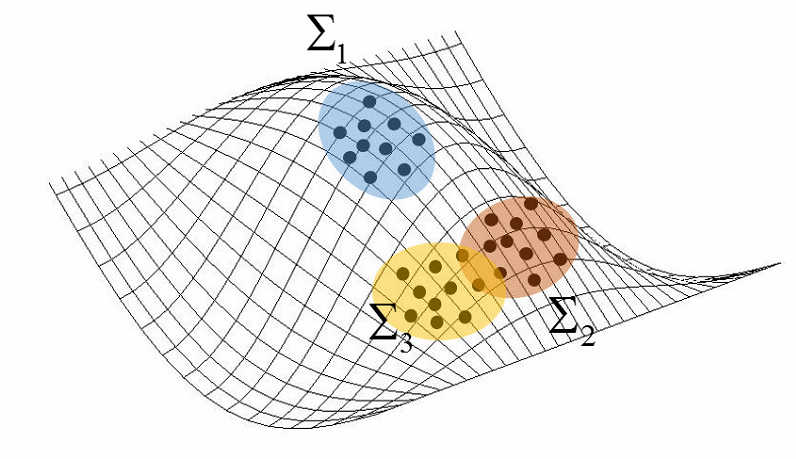}}
		\subfloat[\label{fig:manifold_1cov}]{
			\centering{}\includegraphics[width=0.3\paperwidth,height=0.17\paperheight]{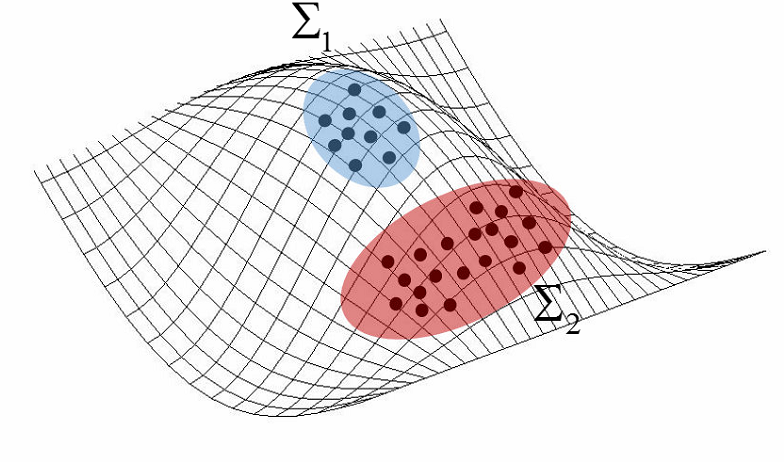}}
		\par\end{centering}
	\centering{}\caption{An illustration of the manifold $\mathcal{M}_c$, demonstrates the trade-off between locality (a) and the large number of columns required for the estimation of a local inverse covariance (b).  \label{fig:cov_on_manifold}}
\end{figure}

Considering the above trade-off, we propose a different construction of the local Mahalanobis distance based on the concept of the informed distance presented in Section \ref{sec:Global-Mahalanobis}.
%

Let $H^{(i)}$ be the clustering matrix obtained by the k-means of the rows of $\mathbf{D}^{(i)}$ defined in Section \ref{sec:local_M_pca}. 
Similarly to \eqref{eq:subGrad}, we start with $\widetilde{U}^{(i)}_K={U}^{(i)}_K$, and compute the new principal directions by:
\begin{equation}
\label{eq:subGraLocal}
\begin{aligned}
& \text{for $l = 1,2,3,...$} \\
& \qquad \widetilde{U}^{(i)}_K \leftarrow \widetilde{U}^{(i)}_K - \alpha_l \nabla R\left(\widetilde{U}^{(i)}_K\right) \\
& \qquad \widetilde{U}^{(i)}_K \leftarrow H^{(i)}H^{(i)^T} \widetilde{U}^{(i)}_K
\end{aligned}
\end{equation}
Using \eqref{eq:subGraLocal}, the pseudo-inverse of the local covariance is given by:
\begin{equation}
\label{eq:new_local_cov}
\widetilde{\Sigma}_{i}^{^{\dagger}} =\widetilde{U}_{d}^{\left(i\right)}{\Lambda_{d}^{(i)}}^{-1}\widetilde{U}_{d}^{\left(i\right)^{T}}
\end{equation}
By replacing $\widehat{\Sigma}_i$ with $\widetilde{\Sigma}_i$ we rewrite \eqref{eq:d_LM} and obtain the informed local Mahalanobis distance:
\begin{equation}
\label{eq:d_inf}
d_{ILM}^2\left(c_{1},c_{2}\right) =\frac{1}{2}\left(c_{1}-c_{2}\right)^{T}\left(\widetilde{\Sigma}_{1}^{^{\dagger}}+\widetilde{\Sigma}_{2}^{^{\dagger}}\right)\left(c_{1}-c_{2}\right)
\end{equation}
The construction procedure of $d_{ILM}$ is summarized in Algorithm \ref{alg:Local Mahalanobis-distance-informed}. 

\begin{algorithm}
	\textbf{input}: Data matrix $\mathbf{D}\in\mathbb{R}^{m\times n}$
	\begin{enumerate}
		\item For each columns $c_{i}$ compute the local inverse covariance:
		\begin{enumerate}
			\item Create the matrix $\mathbf{D}^{\left(i\right)}$ which consists of $N$ nearest neighbors
			of $c_{i}$ in its columns
			\item Apply k-means clustering to the rows of $\mathbf{D}^{\left(i\right)}$, and
			get $H^{\left(i\right)}$
			\item Find $d\leq K$ principal directions of $\mathbf{D}^{\left(i\right)}$, $U_{d}^{\left(i\right)}=\left[u_{1}^{\left(i\right)}\,u_{2}^{\left(i\right)}\,...\,u_{d}^{\left(i\right)}\right]$, by taking the
			$d$ eigenvectors of $\Sigma_{D^{(i)}}=\frac{1}{N-1}\mathbf{D}^{\left(i\right)}\mathbf{D}^{\left(i\right)^T}$ with the largest eigenvalues $\left\{ \lambda_{j}^{\left(i\right)}\right\} _{j=1}^{d}$
			\item Compute new principal directions: \begin{equation}
			\begin{aligned}
			& \text{for $l = 1,2,3,...$} \\
			& \qquad \widetilde{U}^{(i)}_d \leftarrow \widetilde{U}^{(i)}_d - \alpha_l \nabla R\left(\widetilde{U}^{(i)}_d\right) \\ \nonumber
			& \qquad \widetilde{U}^{(i)}_d \leftarrow H^{(i)}H^{(i)^T} \widetilde{U}^{(i)}_d
			\end{aligned}
			\end{equation}
			where:
			\begin{equation}
			\nabla R\left(\widetilde{U}^{(i)}_d\right) =-2\left(\left(I-\widetilde{U}^{(i)}_d \widetilde{U}^{(i)^T}_d\right)\Sigma_{D^{(i)}} + \Sigma_{D^{(i)}} \left(I-\widetilde{U}^{(i)}_d \widetilde{U}^{(i)^T}_d\right)\right)\widetilde{U}^{(i)}_d \nonumber
			\end{equation}
			\item Normalize the principal directions $\widetilde{U}_{d}^{\left(i\right)}$
			\item Compute the local pseudo-inverse covariance of the point $c_{i}$:
			\begin{align*}
			\widetilde{\Sigma}_{i}^{^{-1}} & =\widetilde{U}_{d}^{\left(i\right)}{\Lambda_{d}^{(i)}}^{-1}\widetilde{U}_{d}^{\left(i\right)^{T}}\\
			{\Lambda_{d}^{(i)}}^{-1} & =diag\left\{ \lambda_{1}^{\left(i\right)^{-1}},\lambda_{2}^{\left(i\right)^{-1}},...,\lambda_{d}^{\left(i\right)^{-1}}\right\} 
			\end{align*}
		\end{enumerate}
		\item The distance between two points:
		\[
		d_{ILM}^2\left(c_{1},c_{2}\right)={\frac{1}{2}\left(c_{1}-c_{2}\right)^{T}\left(\widetilde{\Sigma}_{1}^{^{-1}}+\widetilde{\Sigma}_{2}^{^{-1}}\right)\left(c_{1}-c_{2}\right)}
		\]
	\end{enumerate}
	\caption{Local Mahalanobis Distance Informed by Clustering \label{alg:Local Mahalanobis-distance-informed}}
\end{algorithm}
In the example presented in Section \ref{subsec:Global-exp} we have shown
an equivalence between the global Mahalanobis distance, $\hat{d}_M$, and the one
informed by clustering, $\tilde{d}_{M}$.
However where there is a constraint on the number
of columns, as required for locality in the local
Mahalanobis distance, the estimated pseudo-inverse of the local covariance is less accurate. 
For a covariance of rank $d$, estimated by $N$ columns, the estimation error is increased with $d/N$. 
However, these two parameters do not necessarily degrade the performance of the clustering
of the rows: k-means can be applied successfully to $m$ rows even for a large $m$ (or $d$), and regardless of the dimension of the rows $N.$ Once the clustering of the rows is computed, we utilize it to improve the estimation of the local pseudo-inverse covariance.
Therefore we expect our approach to be beneficial whenever the number
of columns available for the estimation of the high-rank pseudo-inverse covariance is limited, provided that the rows exhibit a typical clustering structure.
This will be demonstrated in the next section.

\section{Synthetic Toy Problem}
\label{sec:toy_exp}
\subsection{Description}

To examine our approach we generate synthetic data according to
the problem setting described previously: samples are in high dimensional
space, and coordinates have a distinct structure of clusters. 
Consider high-dimensional data stored in the matrix $\mathbf{D}$, where the columns and rows 
represent, for example, subjects and their properties, respectively. 
We assume that the columns (e.g., subjects) can belong to two types (e.g., two types of population), namely, TypeA and TypeB, 
with the same probability $p=0.5$.
The rows (e.g, properties) are divided into $K$ clusters (e.g., demographic properties, physiological properties, etc.), $G_k, k=1,\ldots,K$, such that each two properties are correlated if and only if they belong to the same cluster. 
Given the type of the column and the cluster of the row, the
distribution of the elements of the data is given by:
\begin{align}
\label{eq:toy_dist}
D_{ij}|\,\,i  \in G_{k},\,j\in \mathrm{TypeA} &\sim\mathcal{N}\left(\mu_{kA},\sigma^{2}\right)\nonumber \\ 
D_{ij}|\,\,i \in G_{k},\,j\in \mathrm{TypeB} &\sim\mathcal{N}\left(\mu_{kB},\sigma^{2}\right)
\end{align}
The covariance matrix of the columns is given by a block diagonal matrix:
\begin{eqnarray*}
	C & = & \left(\begin{array}{cccc}
		\Gamma_{1} & 0 & \cdots & 0\\
		0 & \Gamma_{2} &  & \vdots\\
		\vdots &  & \ddots & 0\\
		0 & \cdots & 0 & \Gamma_{K}
	\end{array}\right) \in \mathbb{R}^{m\times m}
\end{eqnarray*}
where $\Gamma_{k}, k=1,\ldots,K$ contains the variance $\sigma^{2}$ and the covariance $\rho$ elements between the properties
in cluster $G_{k}$:
\[
\Gamma_{k}=\left(\begin{array}{cccc}
\sigma^{2} & \rho & \cdots & \rho\\
\rho & \sigma^{2} &  & \vdots\\
\vdots &  & \ddots & \rho\\
\rho & \cdots & \rho & \sigma^{2}
\end{array}\right) \in \mathbb{R}^{m_{k}\times m_{k}}
\]
Here $m_{k}$ represents the size of cluster $G_{k}$. 
Let $\mu_{A}$ and $\mu_{B}$ be the conditional expected value of a column given its type,
i.e. $\mu_{A}=E\left[c_{j}\,|\,j\in \mathrm{TypeA}\right]$
and $\mu_{B}=E\left[c_{j}\,|\,j\in \mathrm{TypeB}\right]$. Using the law of
total expectation, the covariance matrix of all columns is given
by:
\begin{equation}
\label{eq:toy_sigma}
\Sigma=C+\frac{1}{4}\left(\mu_{A}-\mu_{B}\right)\left(\mu_{A}-\mu_{B}\right)^{T}
\end{equation}
Figure \ref{fig:quest_data_unordered} presents a data set of $100$ samples (columns) generated
according to the above model. For illustration purposes, in Fig. \ref{fig:quest_data_ordered}  the columns and rows
are ordered according to their type and cluster, respectively.
Note that the proposed approach does not take into account the specific order of the columns and rows of the given data set. 


\begin{figure}
	\begin{centering}
		\subfloat[\label{fig:quest_data_unordered}]{\centering{}\includegraphics[width=0.26\paperwidth,height=0.275\paperheight]{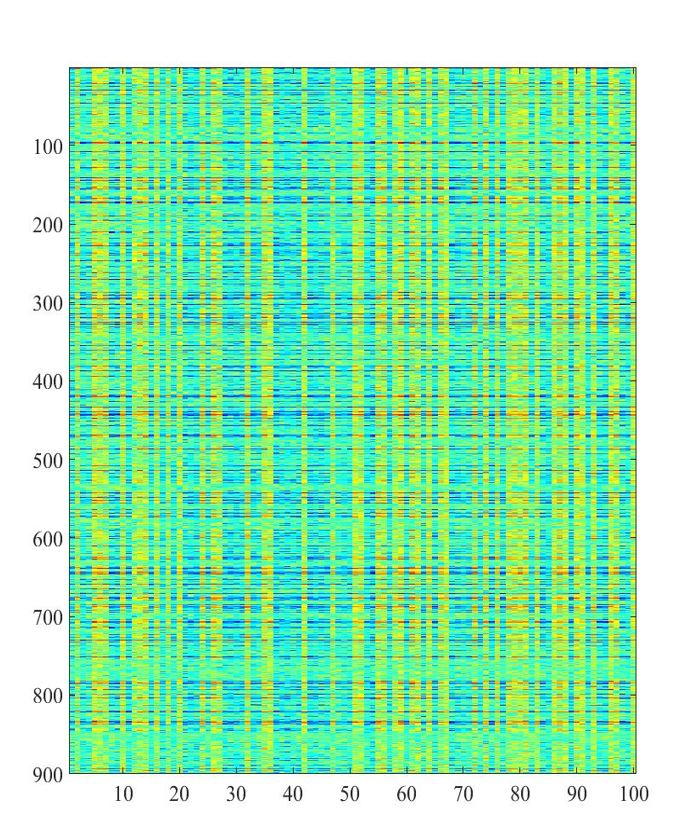}}
		\subfloat[\label{fig:quest_data_ordered}]{
			\centering{}\includegraphics[width=0.52\paperwidth,height=0.27\paperheight]{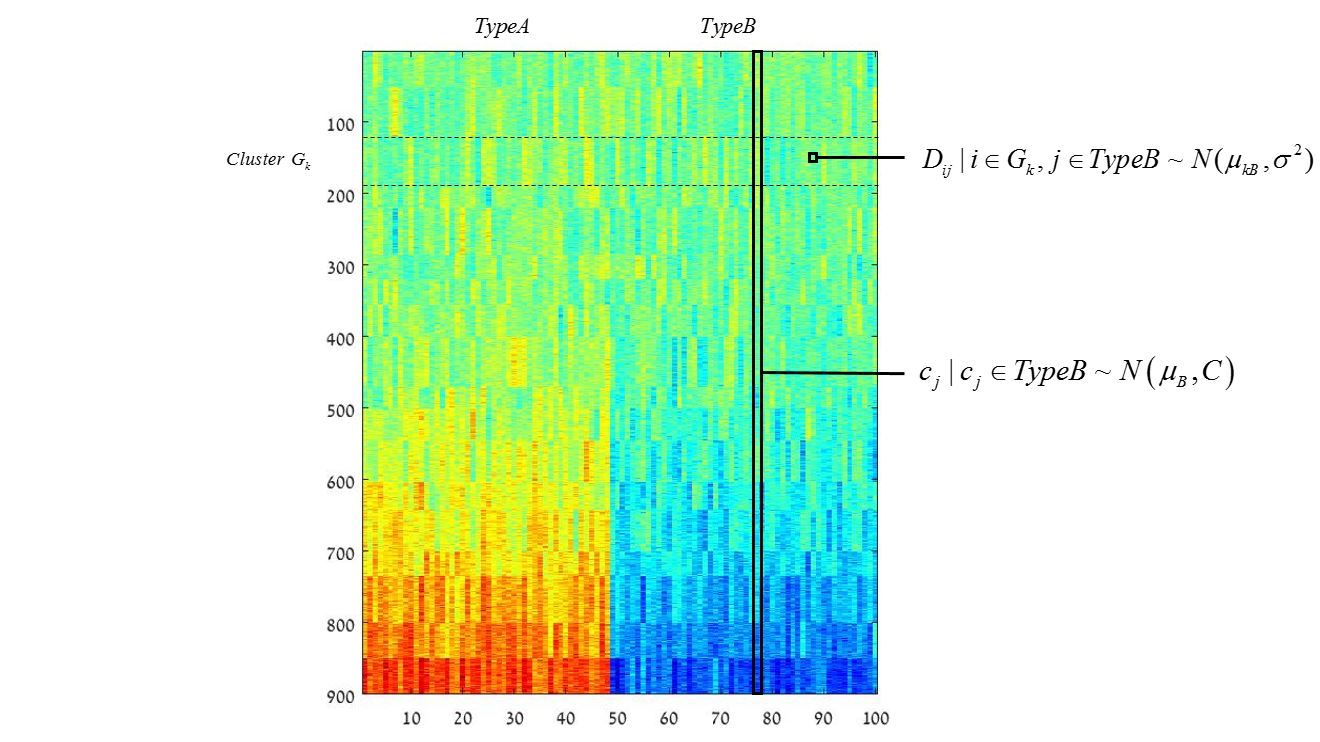}}
		\par\end{centering}
	\centering{}\caption{Data set of $n=100$ samples generated according to the model described in Section \ref{sec:toy_exp}. Entry $D_{ij}$ represents the $i$th property of the $j$th subject. The sample $c_{j}$ represents all the properties of the $j$th subject.(a) The matrix $\mathbf{D}$ where the order of columns and rows is random. (b) Columns and rows are organized according to clusters and types respectively. \label{fig:quest_data}}
\end{figure}

\subsection{Results}
\begin{figure}[t]
	\begin{centering}
		\subfloat[]{
			\centering{}\includegraphics[width=0.35\paperwidth,height=0.2\paperheight]{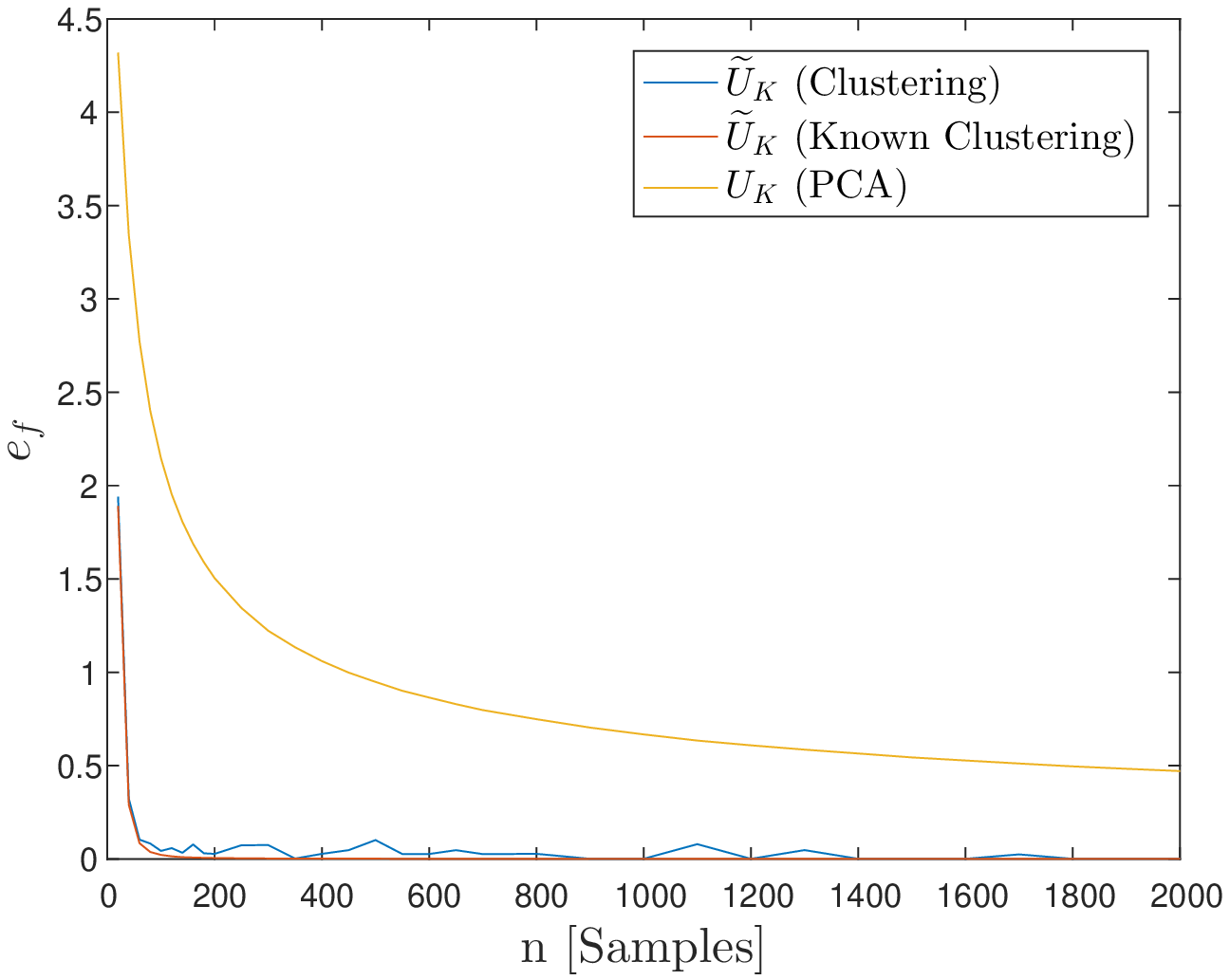}}
		\subfloat[]{
			\centering{}\includegraphics[width=0.35\paperwidth,height=0.2\paperheight]{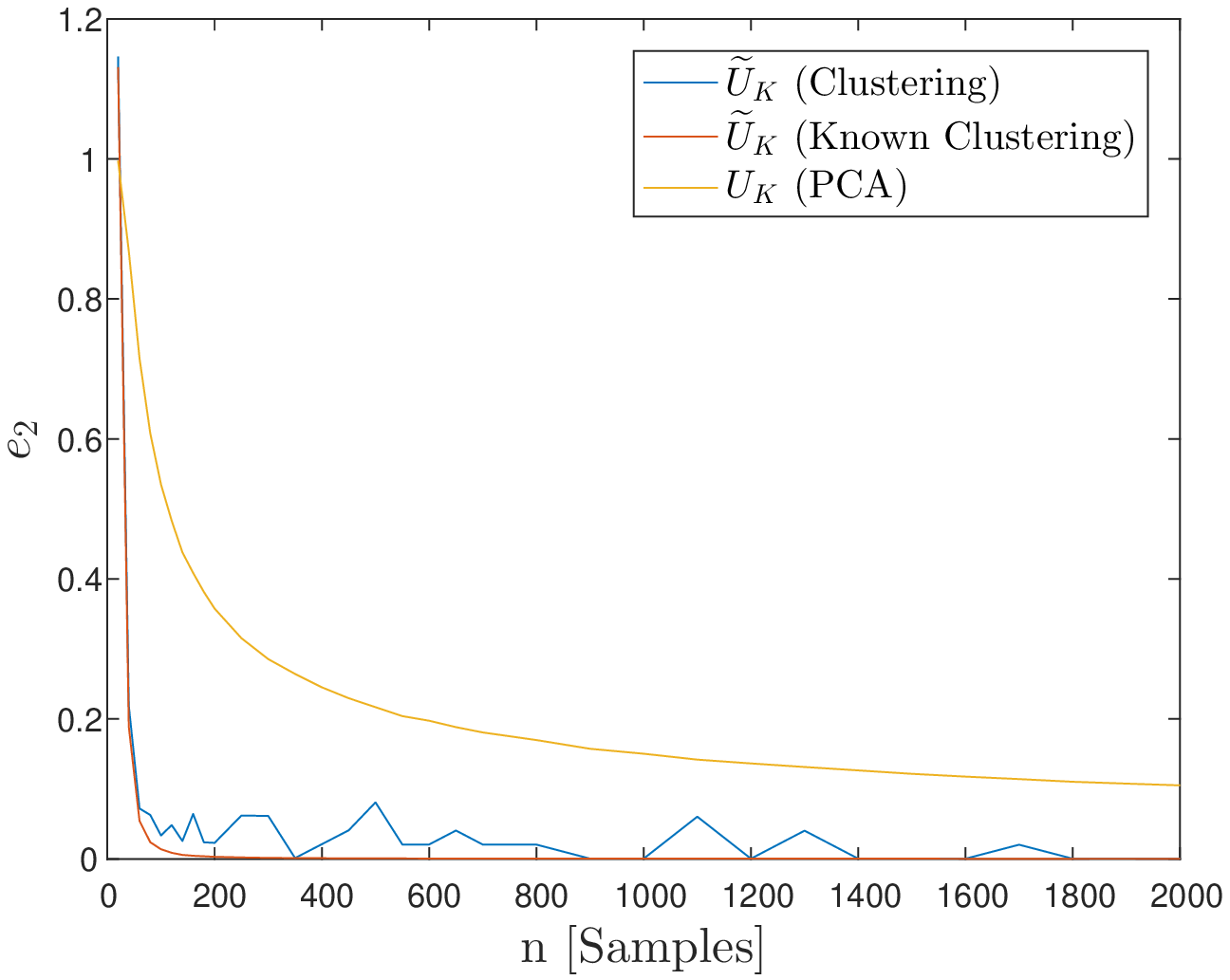}}
		\par\end{centering}
	\caption{Principal directions estimation error as a function of the number
		of subjects $n$, computed with: (a) Forbenius norm $\|\cdot\|_{F}$, (b) 2-norm $\|\cdot\|_{2}$. In yellow line the \ac{PCA} error. In blue and red lines
		the error of the proposed principal directions $\widetilde{U}_{K}$ using
		k-means and a known clustering solution respectively.} \label{fig:quest_err}
\end{figure}
We compare between the principal directions obtained by our method in \eqref{eq:subGrad} and the principal directions obtained by \ac{PCA}. To isolate the error obtained solely by the clustering assumption, we also compute the principal directions using a known clustering solution. 

Denote the estimated principal directions by $\widehat{U}_{K}$.
The objective measure used to evaluate the obtained principal directions $\widehat{U}_{K}$ is
the empirical mean error with respect to the first $K$ eigenvectors $\bar{U}_{K}$
of the true covariance $\Sigma$ calculated analytically from the model \eqref{eq:toy_sigma}. The empirical mean error is computed with the Forbenius norm:
\[
e_f=\langle \|\bar{U}_{K}\bar{U}_{K}^{T}-\widehat{U}_{K}\widehat{U}_{K}^{T}\|_F\rangle
\]
and with the 2-norm:
\[
e_2=\langle \|\bar{U}_{K}\bar{U}_{K}^{T}-\widehat{U}_{K}\widehat{U}_{K}^{T}\|_2\rangle
\]
where $\langle \cdot \rangle$ is the empirical mean over the realizations of the entire data set.
Note that this criterion is invariant to any orthogonal transformation (rotation) of $\bar{U}_{K}$. 
We note that in Section \ref{sec:Global-Mahalanobis} the number of principal directions is the number of the k-means clusters minus one. Here, to complete the number of principal directions to the rank $K$ of the true covariance $\Sigma$, we compute the $K$th principal direction, $\tilde{u}_K$, in the same way as in \eqref{eq:subGrad}.

We consider data with $m=900$ rows, divided into $K=18$ clusters. 
For different sizes of $n$, we generate $n$ samples (columns) according to \eqref{eq:toy_dist} and repeat the experiment $50$ times.
The error averaged over the experiments is presented in Fig. \ref{fig:quest_err} as a function of $n$. 
We observe that the error of $\widetilde{U}_{K}$ based on clustering is lower than
the error of PCA, especially for small values of $n$. While a small amount of columns results in a significant estimation error of the principal directions, rows can be clustered more easily. By exploiting these clusters, the proposed principal directions $\widetilde{U}_{K}$ indeed yields a better estimation.
As $n$ increases, the difference between the errors obtained by our method in \eqref{eq:subGrad} and by \ac{PCA} decreases.
This happens for two reasons. First, \ac{PCA}-based estimation improves when more samples are available for the sample covariance estimation $\Sigma_D$. Second, since the dimension of the rows $n$ increases, finding the
optimal solution of k-means clustering becomes more difficult. 
Conversely, we observe that for values of $n$ smaller than $60$, the known clustering has negligible contribution compared to the estimated clustering (since the dimension of the rows is small).
We note that errors computed with different norms (e.g. $\|\cdot\|_{1}$)
yield comparable results and trends. 

As was discussed in Section \ref{sec:Global-Mahalanobis}, an intuitive
explanation for the definition of $\widetilde{U}_{K}$ is the averaging
of the principal directions obtained by \ac{PCA} according to the
clustering of the rows found by k-means. 
To illustrate this property, we present
the true covariance matrix of the columns (ordered according to the clustering of the rows) in Fig. \ref{fig:quest_real_cov} and the estimated covariance matrices obtained by \ac{PCA} and by our method in Fig. \ref{fig:quest_pca_cov} and Fig. \ref{fig:quest_clust_cov}, respectively.

\begin{figure}[H]
	
	\begin{centering}
		\subfloat[\label{fig:quest_real_cov}]{\centering{}\includegraphics[width=0.35\paperwidth,height=0.2\paperheight]{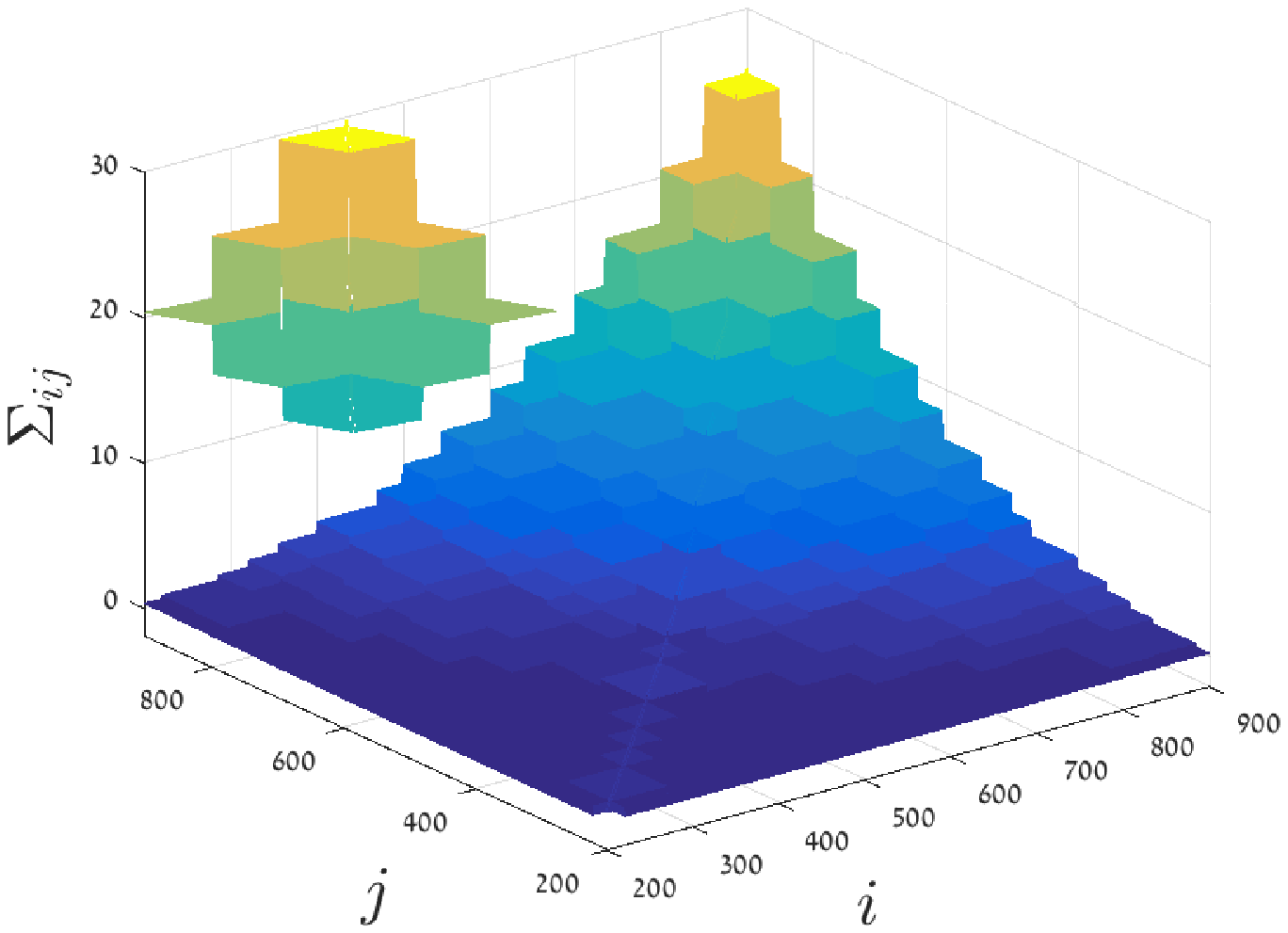}}
		\subfloat[\label{fig:quest_pca_cov}]{\centering{}\includegraphics[width=0.35\paperwidth,height=0.2\paperheight]{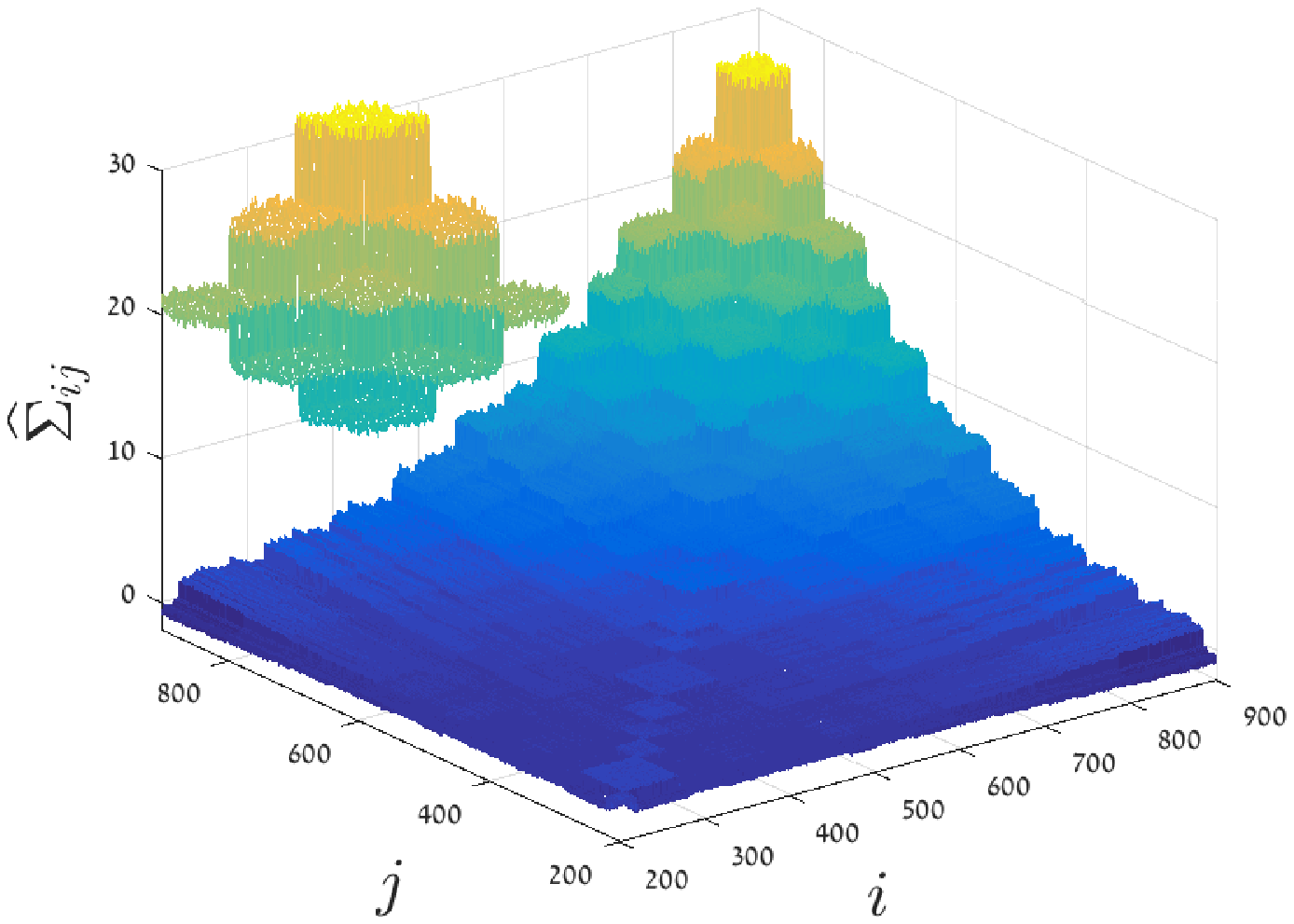}}
		\par\end{centering}
	\centering{}\subfloat[\label{fig:quest_clust_cov}]{\centering{}\includegraphics[width=0.35\paperwidth,height=0.2\paperheight]{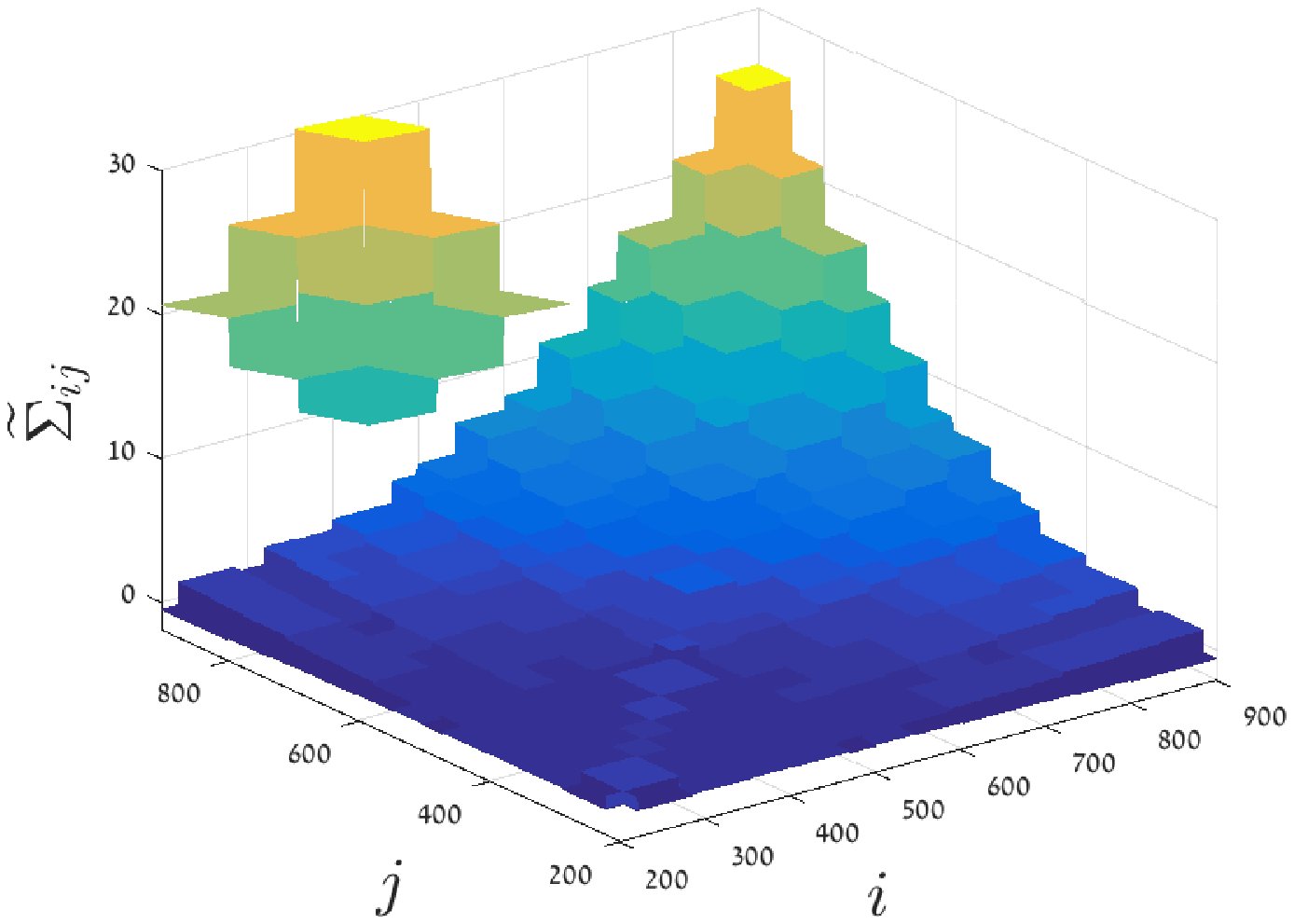}}
	\subfloat[\label{fig:quest_covariance_i900}]{\centering{}\includegraphics[width=0.35\paperwidth,height=0.2\paperheight]{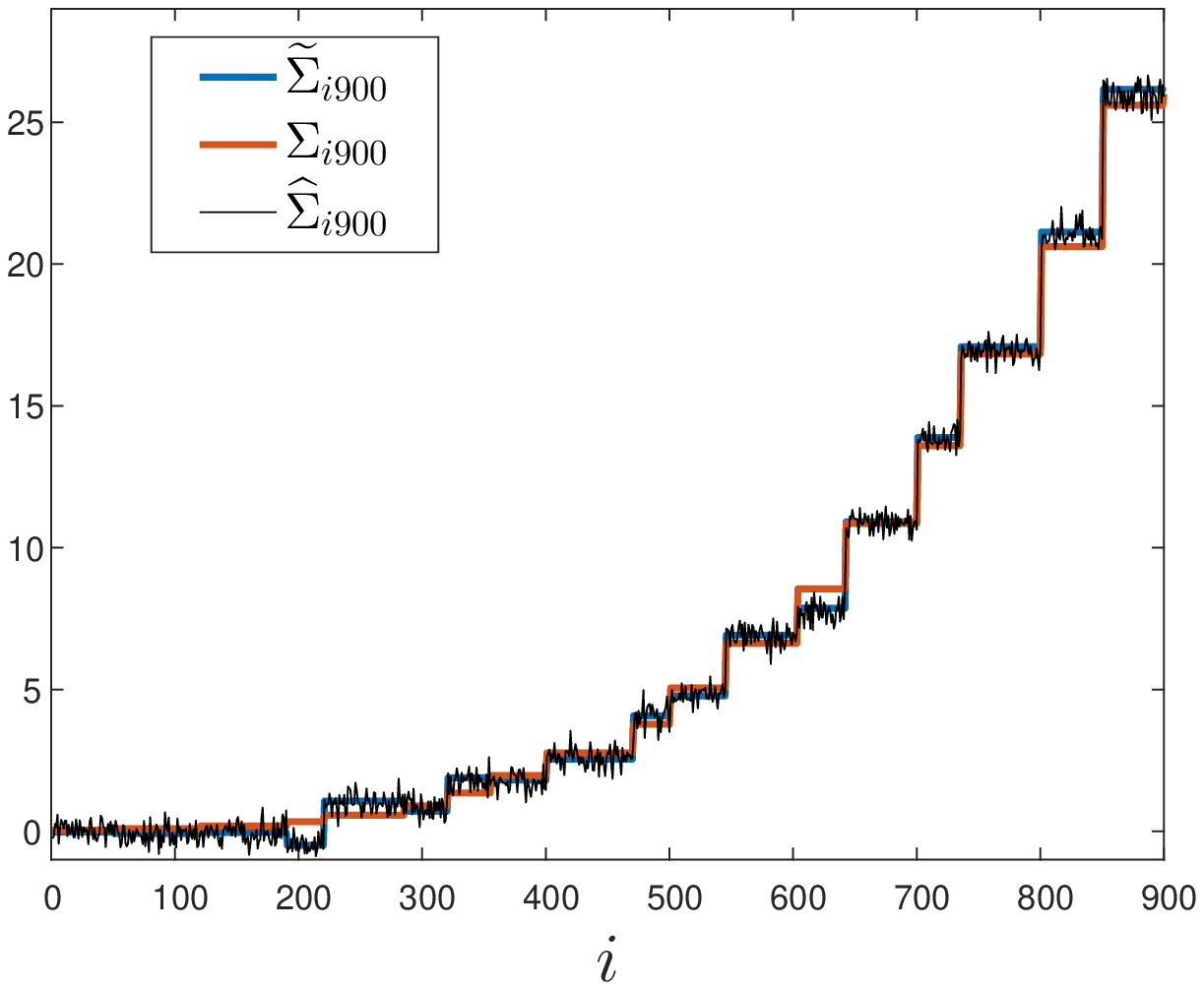}}
	
	\caption{True and estimated covariance matrices of subjects with zoom on properties 750-900: (a) the true
		covariance $\Sigma$, (b) the covariance $\widehat{\Sigma}$ obtained by \ac{PCA},
		(c) the covariance $\widetilde{\Sigma}$ estimated by the proposed approach. (d) Cross section of the covariance matrices in (a), (b) and (c) at property j=900.
		Covariance matrices were estimated using $n=100$ samples. \label{fig:illustration_cov} }
\end{figure}
As can be seen, the true covariance takes the shape of ``stairs'', which is induced by the model.
The estimation error of the covariance obtained by \ac{PCA} (Fig. \ref{fig:quest_pca_cov})
is manifested as noise mounted on each stair.
The contribution of the clustering is evident in Fig. \ref{fig:quest_clust_cov}, where we observe that when the clusters are taken into account, each stair is denoised by averaging all the entries belonging to this stair, resulting
in a structure that visually more similar to the true covariance depicted in Fig. \ref{fig:quest_real_cov}. The smoothing effect of each stair is illustrated also by comparing one of the rows of the covariance (the $900$th property) as presented in Fig. \ref{fig:quest_covariance_i900}. 
\begin{figure}
	\begin{centering}
		\subfloat[\label{fig:quest_embd_pca}]{\centering{}\includegraphics[width=0.35\paperwidth,height=0.2\paperheight]{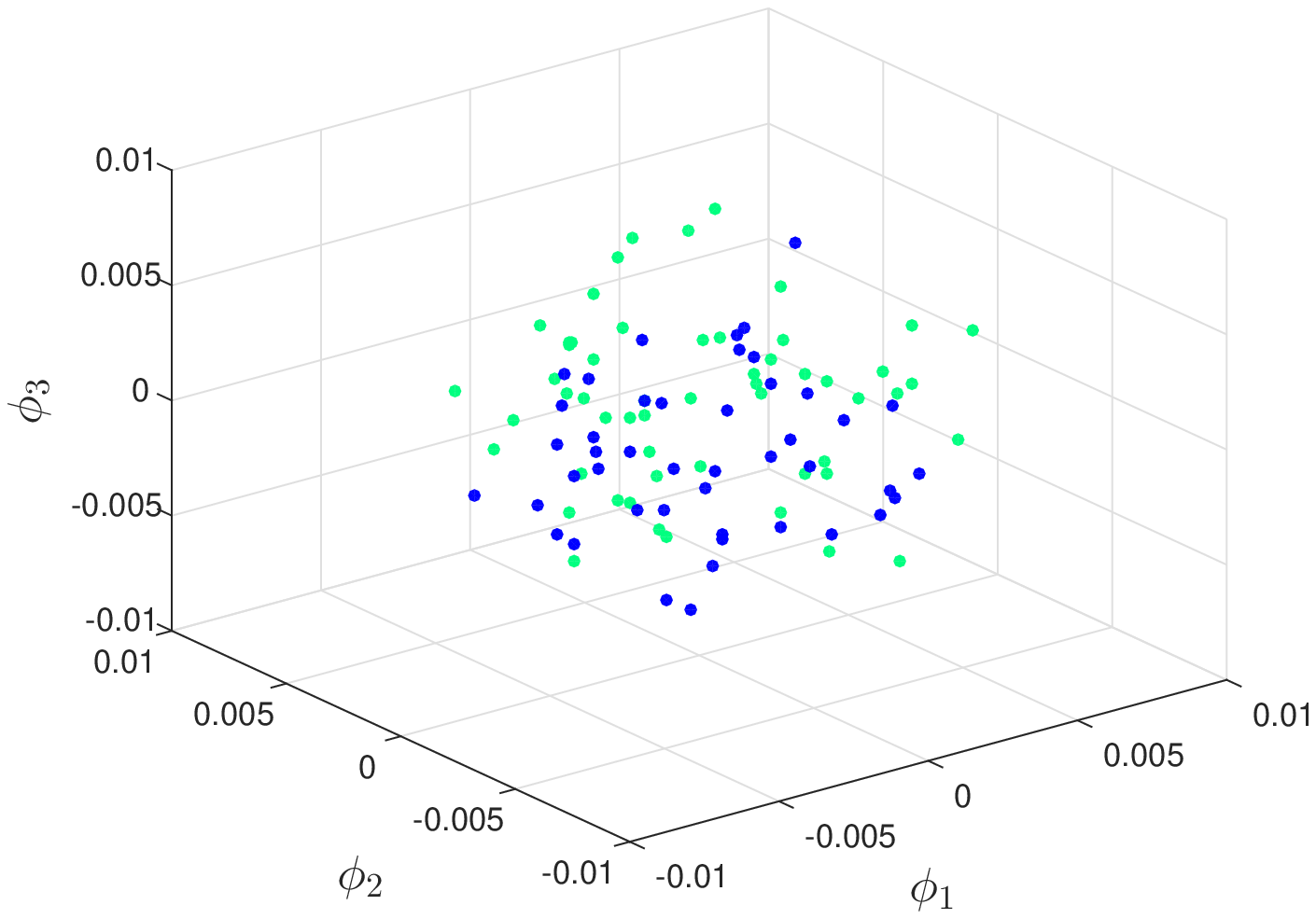}}\subfloat[\label{fig:quest_embd_clust}]{\begin{centering}
				\includegraphics[width=0.35\paperwidth,height=0.2\paperheight]{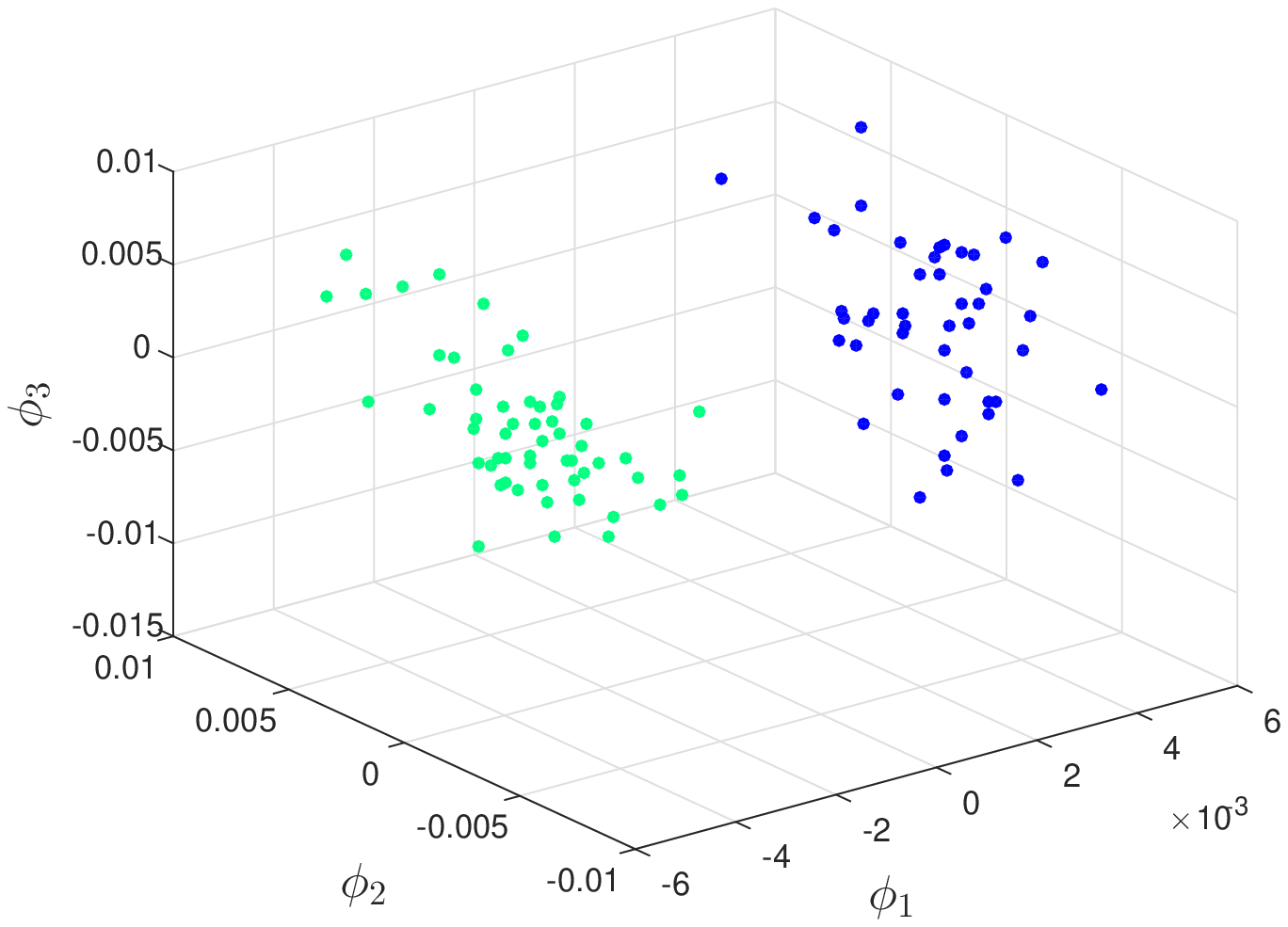}
				\par\end{centering}
		}
		\par\end{centering}
	\centering{}\caption{Embedding of the columns colored according to their type (TypeA, TypeB). (a) embedding
		obtained by the conventional Mahalanobis distance $\hat{d}_{M}.$ (b) embedding
		obtained by the proposed distance $\tilde{d}_{M}.$ \label{fig:quest_embd}}
\end{figure}

Recall that the covariance matrices $\widehat{\Sigma}$ and $\widetilde{\Sigma}$ (which are estimated by \ac{PCA} \eqref{eq:conventional_cov_rank_k} and our approach \eqref{eq:new_cov}) can be used to define the respective Mahalanobis distances $\hat{d}_{M}$ and $\tilde{d}_{M}$. To evaluate the benefit of our method in terms of the metric, we compute the diffusion maps of $n=100$ columns as described in Section \ref{subsec:Global-exp} twice separately based on $\hat{d}_{M}$ and $\tilde{d}_{M}$.

The embedding (of the columns) obtained by each of the distances is presented in Fig. \ref{fig:quest_embd}, where each embedded column is colored according to its type. 
It can be clearly observed in Fig. \ref{fig:quest_embd_clust} that when the diffusion maps is based on the proposed distance $\tilde{d}_{M}$, columns in the embedded space are well separated according to their type. Conversely, as presented in Fig. \ref{fig:quest_embd_pca},
when using $\hat{d}_{M}$ the two point clouds representing the two types are mixed.

Note that the proposed metric $\tilde{d}_{M}$ can be incorporated not only into diffusion maps algorithm, but also into other dimensionality reduction algorithms, such as t-SNE \cite{maaten2008visualizing}.

\section{Application to Gene Expression Data}
\label{sec:gene}

To demonstrate the applicability of our approach we apply it to real data, and more specifically, to gene expression data. The technology of DNA microarray allows to measure the expression levels for tens of thousands of genes simultaneously. This massive genomic data, which contains gene profile of different subjects, is used for a variety of analysis tasks such as: predicting gene function classes, cancer classification, predicting putative regulatory signals in the genome sequences \cite{brazma2000gene}, identifying prognostic gene signatures \cite{Ma2015,shedden2008gene}, unsupervised discrimination of cell types \cite{kluger2004lineage}, etc. 

We apply our method to gene expression data set for the purpose of analysing lung cancer subjects. Using the proposed method we will show that we can accomplish an analysis task, which was previously approached by several prognostic models \cite{Ma2015,shedden2008gene}: clustering and ordering subjects with respect to their (unknown) survival rate. Here, this will be accomplished through the construction of our informed metric between the subjects. 

The gene profile of a subject in a typical data set may contains tens of thousands of genes. Therefore, the application of a metric such as the local Mahalanobis distance \cite{Singer2008}, which is based on a na\"{i}ve estimation of the local covariance from a limited number of high dimensional samples (e.g., the sample covariance which is typically used in \ac{PCA} applications) often yields   poor performance. 
We will show that using the metric proposed in Algorithm \ref{alg:Local Mahalanobis-distance-informed}, which is informed by clusters of genes, results in improved performance.

The setting of gene expression data can be formulated similarly to the problem setting considered in this paper. Consider a set of $n$ subjects for which a gene expression profiles composed of $m$ genes were collected. The data is organized in a matrix $\mathbf{D} \in \mathbb{R}^{m\times n}$ corresponding to the problem setting defined in Section \ref{sec:intro}. 

We analyze a data set termed CAN/DF (Dana-Farber Cancer Institute) and taken from \cite{shedden2008gene} consisting of gene expression profiles collected from lung adenocarcinomas (cancer subjects), which was previously analyzed in Shedden et al \cite{shedden2008gene}.
The data set consists of $82$ subjects, for which profiles of 22,283 gene expressions were collected.
Following common practice, we use only genes with high variance; specifically, we take the $200$ genes with the highest variance. 
The data is organized in a matrix $\mathbf{D} \in \mathbb{R}^{200\times82}$, such that the $i$th column corresponds to the
$i$th subject, and the $j$th row corresponds to the $j$th gene. 
In order to illustrate the clustering structure embodied in the genes, k-means is applied to the rows of $\mathbf{D}$, which are then
ordered according to the clusters. 
The heat map of the matrix $\mathbf{D}$ is presented in Fig. \ref{fig:genes_data_var} and the reordered matrix is presented in Fig. \ref{fig:genes_data_clust}. Indeed, in Fig. \ref{fig:genes_data_clust}, we can observe a clear structure of gene clusters.
\begin{figure}
	\begin{centering}
		\subfloat[\label{fig:genes_data_var}]{\centering{}\includegraphics[width=0.35\paperwidth,height=0.2\paperheight]{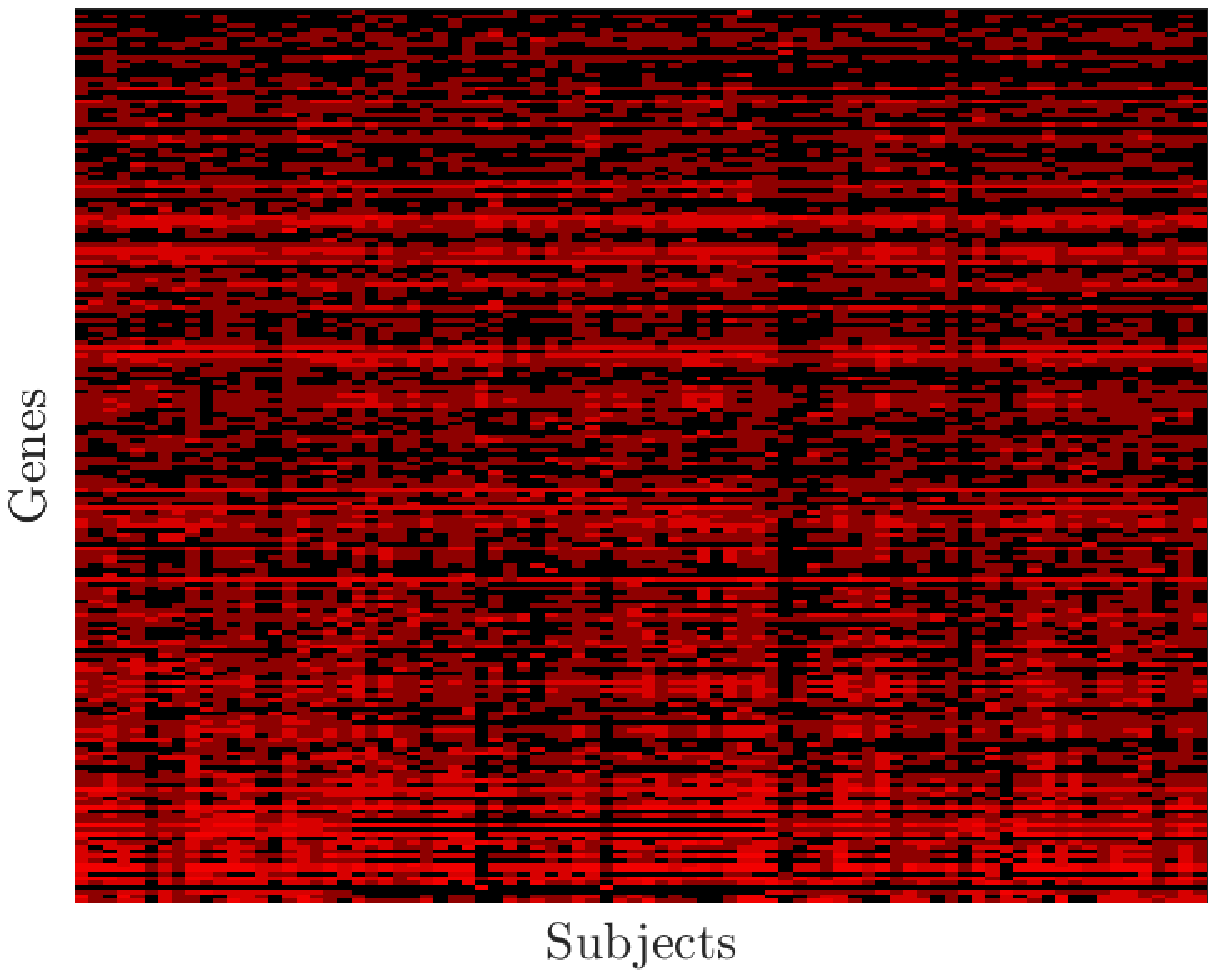}}\subfloat[\label{fig:genes_data_clust}]{\begin{centering}
				\includegraphics[width=0.35\paperwidth,height=0.2\paperheight]{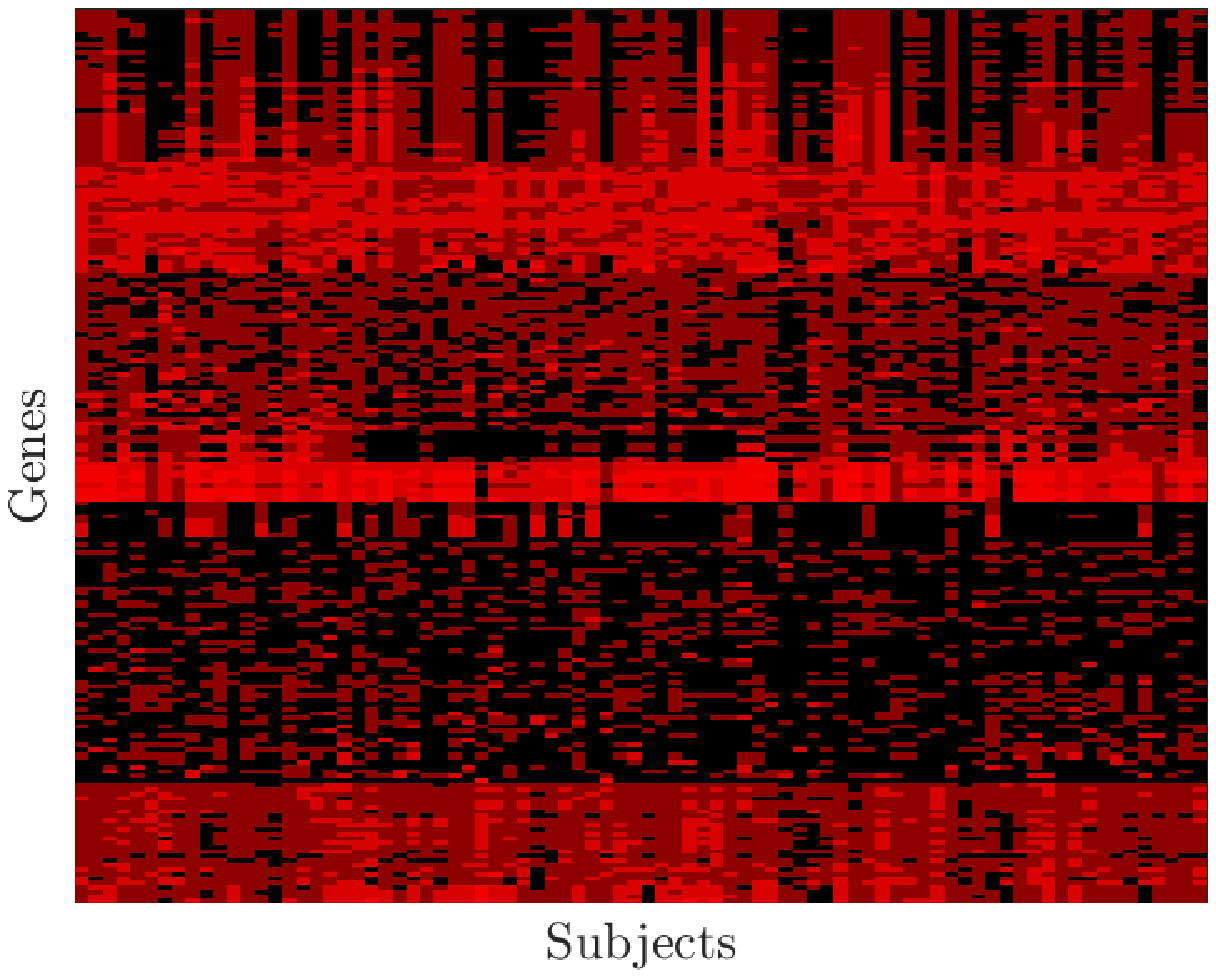}
				\par\end{centering}
		}
		\par\end{centering}
	\centering{}\caption{Heat maps of the expression of $m=200$ genes with the highest variance: (a) genes are ordered according to their variance, (b) genes are ordered according to $K=7$ clusters obtained by k-means.}
	\label{fig:genes_data}
\end{figure}

We calculate the local Mahalanobis distance between the columns of $\mathbf{D}$ in two ways. 
First, by applying local \ac{PCA} as in \cite{Singer2008} (based on the sample covariance), which will be denoted by $d_{LM}$ , and second, by Algorithm \ref{alg:Local Mahalanobis-distance-informed}, which will be denoted
by $d_{ILM}$. 
The number of principal directions used for the
computation of the local inverse covariance of each column can be estimated by the eigenvalues decay. 
Here, in both methods, we use $d=6$ principal directions.
The number of genes clusters used in Algorithm \ref{alg:Local Mahalanobis-distance-informed} is $K=7$. 
This particular value of $K$ was determined empirically, since it yields a good separation to clusters. 
We note that alternatively, $K$ can be set using standard algorithms designed for this purpose such as \cite{tibshirani2001estimating,pelleg2000x}. In addition, in the local \ac{PCA} we could use a different value of $K$ for each neighborhood. We set the neighborhood size to be $N=20$ samples. This choice is explained in the sequel.

The obtained metrics are objectively evaluated in the context of \emph{clustering the subjects (columns)}.
Specifically, we use the following algorithm.
(i) Compute the affinity matrix with a Gaussian kernel based on the competing metrics:
\[
W\left(c_{1},c_{2}\right)=e^{-\frac{d^{2}\left(c_{1},c_{2}\right)}{\varepsilon}}.
\]
Common practice is to set the kernel scale $\varepsilon$ to be of the order of the median of all pairwise distances
$d^{2}\left(c_{i},c_{j}\right)\,\,\forall i,j$. Here, we set
it to be twice the median. 
(ii) Build the diffusion map embedding $\left\{ \tilde{c}_{i}\right\} _{i=1}^{n}$ of the subjects (as described in Section \ref{sec:Global-Mahalanobis}).
(iii) Divide the subjects into two groups according to the sign of the principal component of the embedding  $\left\{ \tilde{c}_{i}\right\} _{i=1}^{n}$. In other words, we apply spectral clustering in the embedded space.

The quality of the clustering is measured with respect to the separation of the subjects to risk groups. Note that the clustering is completely unsupervised data-driven and it does not take into account the survival of subjects. Nevertheless, we get a good separation according to this hidden property. Figure \ref{fig:genes_km_plot} depicts the Kaplan-Meier survival plots obtained
by the two clustering algorithms (stemming from the two metrics). The survival plot of both groups, high and low risk, are presented in blue and red lines respectively.  
Figures \ref{fig:genes_kmPca} and \ref{fig:genes_kmClust} present the survival rate of both groups, where the clustering is obtained by the metric $d_{LM}$ and $d_{ILM}$ respectively.

It can be seen that the separation between the survival curves (blue and red) is more prominent when using the proposed metric $d_{ILM}$.
To objectively measure the degree of separation to two risk groups, we calculate the P-value by the log-rank test \cite{Cardillo2008logrank}
and average it over $20$ iterations. 
The P-value obtained by using the informed metric $d_{ILM}$ is $p_{ILM}=5.6\cdot10^{-3}$, 
and the P-value obtained by using the competing metric $d_{LM}$ is $p_{LM}=21.9\cdot10^{-3}$. 
This degree of separation is better, to the best of our knowledge, than the state of the art separation achieved by an \emph{unsupervised} algorithm SPARCoC \cite{Ma2015}; SPARCoC achieved $p=9.6\cdot10^{-3}$ for the same data set.
We note that SPARCoC was especially designed for molecular pattern discovery and cancer gene identification, whereas our algorithm is generic.
We also note that our results are comparable to other \emph{supervised} algorithms that were trained with two other data sets where the survival of each subject is known, as reported in \cite{shedden2008gene}.
\begin{figure}
	\begin{centering}
		\subfloat[\label{fig:genes_kmPca}]{\centering{}\includegraphics[width=0.35\paperwidth,height=0.2\paperheight]{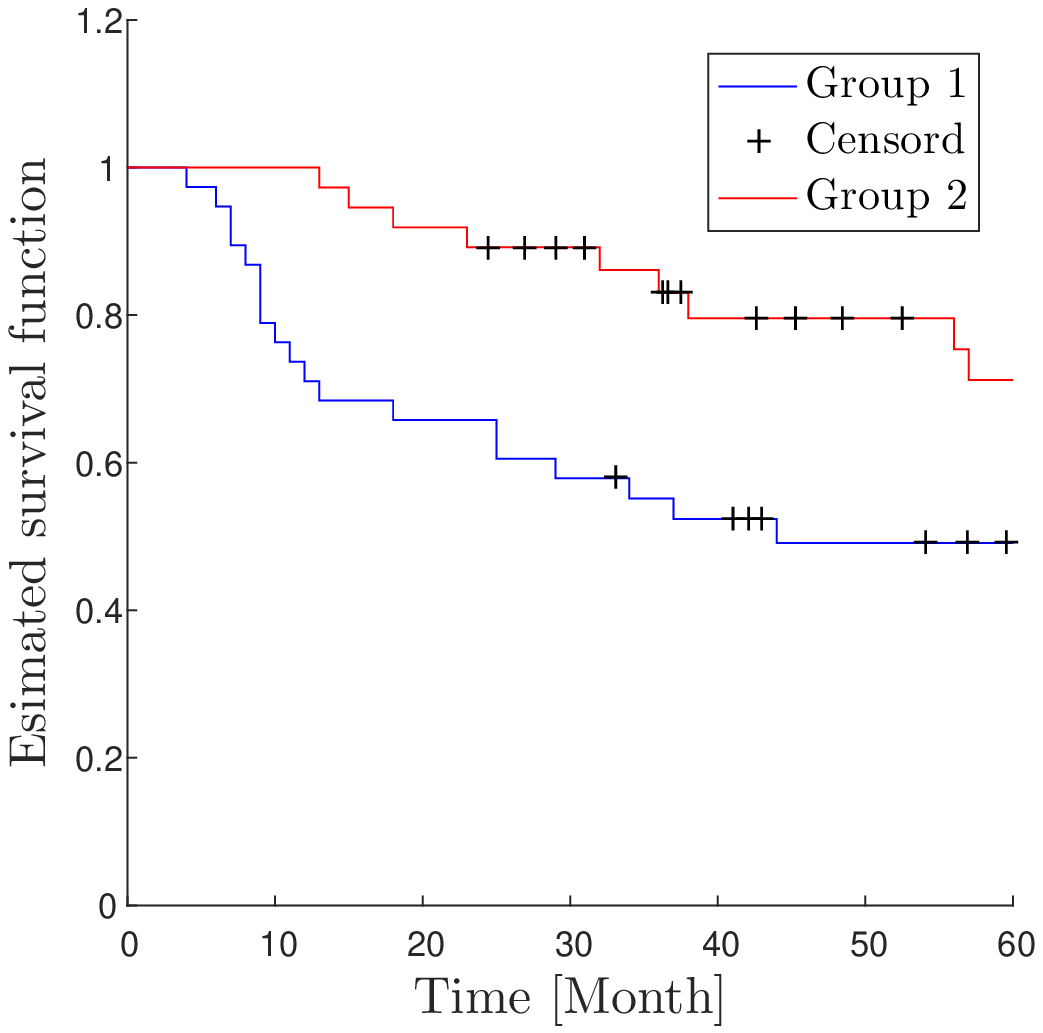}}
		\subfloat[\label{fig:genes_kmClust}]{\begin{centering}\includegraphics[width=0.35\paperwidth,height=0.2\paperheight]{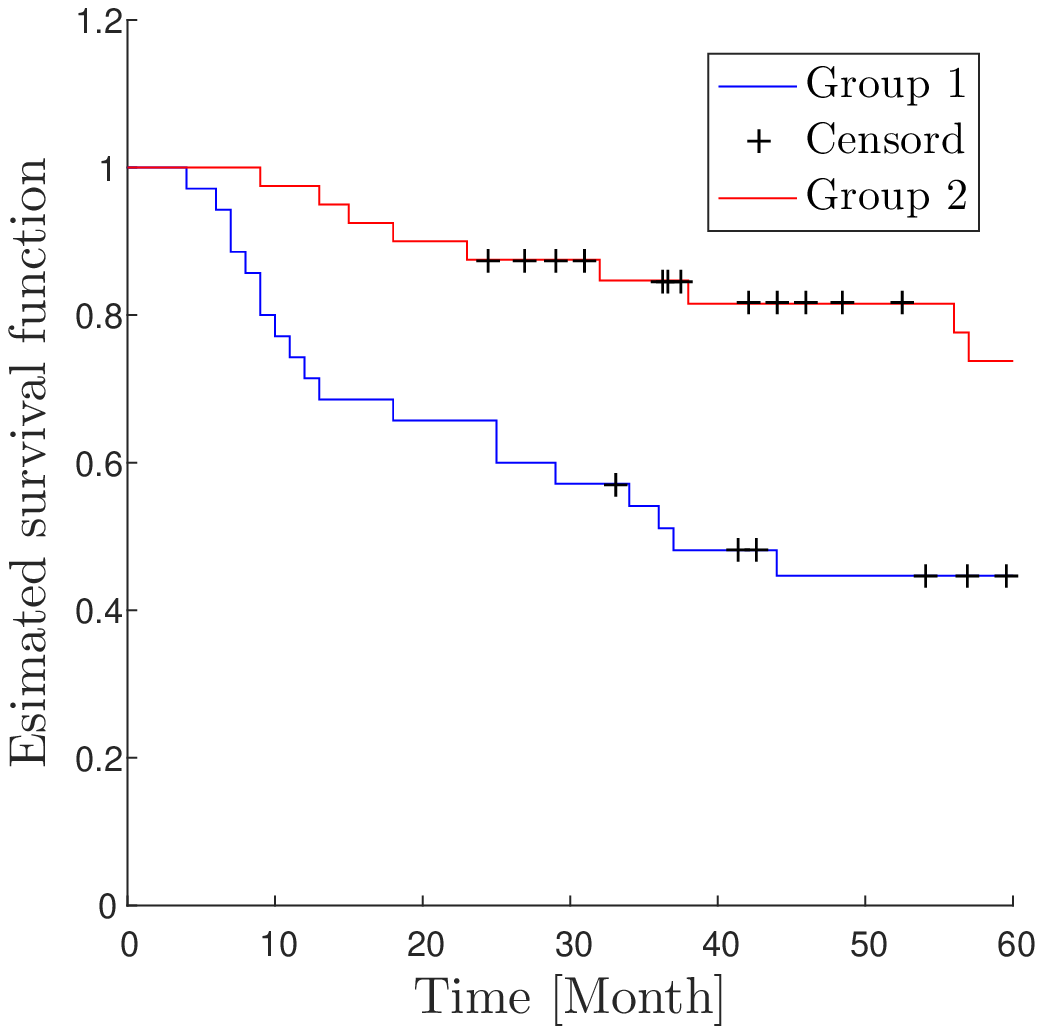}
				\par\end{centering}
		}
		\par\end{centering}
	\centering{}\caption{Kaplan-Meier survival plot of both risk groups. Subjects were divided into two risk groups using: (a) the metric $d_{LM}$, (b) the proposed metric $d_{ILM}$.}
	\label{fig:genes_km_plot}
\end{figure}

Our approach is designed specifically to address cases where
the amount of samples is small compared to the dimension of the samples.
In the computation of the local Mahalanobis distance, the sample set size is defined by the neighborhood size $N$ of each column. 
To demonstrate the benefits stemming from our method in such scenarios, Fig. \ref{fig:pval_vs_N} presents
the P-value obtained for different size of neighborhoods $N$. The red line represents the P-value $p_{LM}$ achieved by the metric $d_{LM}$. Since the P-value $p_{ILM}$ obtained by the metric $d_{ILM}$ depends on the initialization of k-means applied to each neighborhood, we repeat the calculation of $p_{ILM}$ 20 times. The mean is presented in blue line and the range of one \ac{STD} in gray.
We observe that the obtained P-value $p_{ILM}$ is smaller than $p_{LM}$ for small values of $N$, and larger for large values of $N$. The two curves approximately intersect at $N = 50$. Furthermore, the best P-value is achieved by $p_{ILM}$ when $N=34$. Yet, similar performance are attained for a relatively large range of $N$ values (ranging from $20$ to $35$), thereby implying on the robustness to the specific choice of neighborhood size. In addition, note the small \ac{STD} implying on robustness to the k-means initialization as well. 

\begin{figure}
	\centering{}\includegraphics[width=0.35\paperwidth,height=0.2\paperheight]{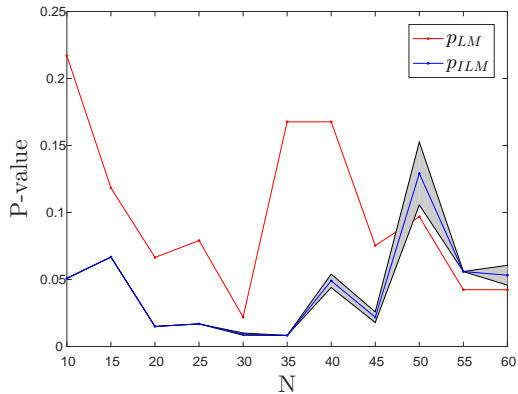}\caption{P-value calculated by log-rank test \cite{Cardillo2008logrank} as a function of the neighborhood size $N$. The P-value $p_{ILM}$ obtained by the proposed distance $d_{ILM}$ is computed for 20 iterations. The mean of $p_{ILM}$ is presented in blue line and the range of one STD in gray. The P-value $p_{LM}$ obtained by the distance $d_{LM}$ is presented in red line.}
	\label{fig:pval_vs_N}
\end{figure}
%
Since the advantage of our approach depends on the neighborhood
size $N$, we show how to find a range of good values of $N$, for the computation of $d_{ILM}$ from the data without using the knowledge of the survival rates.

As presented in Algorithm \ref{alg:Local Mahalanobis-distance-informed}, the matrix $\mathbf{D}^{(i)}$ contains
all the neighbors of $c_{i}$. The inverse local covariance $\Sigma_{i}^{^{-1}}$
is based on the clustering of the rows of $\mathbf{D}^{(i)}$, where the dimension of each row is $N$. 
We therefore use a method called gap statistic \cite{tibshirani2001estimating},
which defines a criterion $g\left(N,K\right)$ for clusters separability that also depends on the dimension $N$. 
The estimate of the range of good values for $N$ is set as the range where $g\left(N,K=k_{0}\right)$ stops increasing with $N$.

This estimation is based on the following intuition. Starting from $N=1$, increasing $N$, increases the separability of the rows of $\mathbf{D}^{(i)}$ to $k_0$ clusters. The indication of a good value of 
neighborhood size $N$
occurs, when the increase in $N$ (adding more columns to $\mathbf{D}^{(i)}$) no longer improves the separability, which results in a significant change in the slope of $g\left(N,K=k_{0}\right)$.
%
Figure \ref{fig:genes_gap} presents the gap statistics curve $g\left(N,7\right)$ as a function of the neighborhood size $N$.
As can be seen,
$g\left(N,7\right)$ indeed stops increasing 
where $N$ ranges from $20$ to $35$,
corresponding to the range of the
optimal neighborhood size 
found experimentally (Fig. \ref{fig:pval_vs_N}).
\begin{figure}
	\centering{}\includegraphics[width=0.35\paperwidth,height=0.2\paperheight]{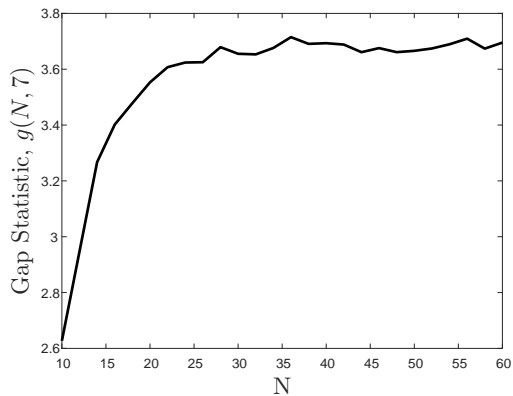}\caption{Gap statistic for $K=7$ clusters as a function of the neighborhood size $N$.}
	\label{fig:genes_gap}
\end{figure}

\section{Conclusions}
\label{sec:conclusions}
In this paper we present a method for computing a metric between high-dimensional data samples informed by the structure of the coordinates. 
The computation of this metric relies on the estimated (local or global) principal directions of the samples. 
We showed that by using clusters of coordinates we obtain a better estimation of the principal directions of the samples compared with the conventional \ac{PCA}. 
The superiority of the {\em informed metric} was illustrated on both synthetic and real data, where we demonstrated the construction of low dimensional representations, accurate clustering (of samples), and the recovery of hidden variables. The informed principal directions can be incorporated not only into the Mahalanobis distance, as presented in this work, but also to any technique which is based on global or local \ac{PCA}.

In future work, we will examine two important aspects stemming from the present work. The first direction is a generalization of the presented approach based on an optimization problem which will take into account the trade-off between the cost function of the conventional \ac{PCA} (maximum variance) and the clustering structure of the coordinates. Such extension should make our approach compatible also to cases were the clusters are less salient. The second direction involves the development of an iterative procedure in which the propose scheme is applied in alternating manner to the rows and the columns.  

\section*{Acknowledgment}

We thank for Regev Cohen and Eden Sassoon for helpful discussions and useful suggestions.


\section*{Appendix A}

Let $A$ be $m\times k$ matrix, where $m<k$, and $\Sigma_{x}$ be
a positive definite matrix. We will prove that the pseudo-inverse
of the matrix $\Sigma=A\Sigma_{x}A^{T}$ is given by:
\[
\Sigma^{\dagger}=A^{\dagger^{T}}\Sigma_{x}^{-1}A^{\dagger}.
\]
\\
The matrix $\Sigma^{\dagger}$ is the pseudo-inverse of $\Sigma$
if the following four properties are satisfied \cite{golub2012matrix}:

\emph{}\\
\emph{Property 1: }$\Sigma\Sigma^{\dagger}\Sigma=\Sigma$

\begin{proof}
	\emph{~
		\begin{eqnarray*}
			\Sigma\Sigma^{\dagger}\Sigma & = & \left(A\Sigma_{x}A^{T}\right)\left(A^{\dagger^{T}}\Sigma_{x}^{-1}A^{\dagger}\right)\left(A\Sigma_{x}A^{T}\right)\\
			& = & A\Sigma_{x}\left(A^{T}A^{\dagger^{T}}\right)\Sigma_{x}^{-1}\left(A^{\dagger}A\right)\Sigma_{x}A^{T}\\
			& = & A\Sigma_{x}\Sigma_{x}^{-1}\Sigma_{x}A^{T}\\
			& = & A\Sigma_{x}A^{T}\\
			& = & \Sigma
		\end{eqnarray*}
	}
\end{proof}

\emph{Property 2: }$\Sigma^{\dagger}\Sigma\Sigma^{\dagger}=\Sigma^{\dagger}$

\begin{proof}
	\emph{~
		\begin{eqnarray*}
			\Sigma^{\dagger}\Sigma\Sigma^{\dagger} & = & \left(A^{\dagger^{T}}\Sigma_{x}^{-1}A^{\dagger}\right)\left(A\Sigma_{x}A^{T}\right)\left(A^{\dagger^{T}}\Sigma_{x}^{-1}A^{\dagger}\right)\\
			& = & A^{\dagger^{T}}\Sigma_{x}^{-1}\left(A^{\dagger}A\right)\Sigma_{x}\left(A^{T}A^{\dagger^{T}}\right)\Sigma_{x}^{-1}A^{\dagger}\\
			& = & A^{\dagger^{T}}\Sigma_{x}^{-1}\Sigma_{x}\Sigma_{x}^{-1}A^{\dagger}\\
			& = & A^{\dagger^{T}}\Sigma_{x}^{-1}A^{\dagger}\\
			& = & \Sigma^{\dagger}
		\end{eqnarray*}
	}
\end{proof}

\emph{Property 3: }$\left(\Sigma\Sigma^{\dagger}\right)^{T}=\Sigma\Sigma^{\dagger}$

\begin{proof}
	\emph{~
		\begin{eqnarray*}
			\left(\Sigma\Sigma^{\dagger}\right)^{T} & = & \left(\left(A\Sigma_{x}A^{T}\right)\left(A^{\dagger^{T}}\Sigma_{x}^{-1}A^{\dagger}\right)\right)^{T}\\
			& = & \left(AA^{\dagger}\right)^{T}\\
			& = & AA^{\dagger}\\
			& = & \left(A\Sigma_{x}A^{T}\right)\left(A^{\dagger^{T}}\Sigma_{x}^{-1}A^{\dagger}\right)\\
			& = & \Sigma\Sigma^{\dagger}
		\end{eqnarray*}
	}
	
	The third equality holds because $A^{\dagger}$ is the pseudo-inverse
	of $A$, and therefore $AA^{\dagger}$ is symmetric. 
\end{proof}

\emph{Property 4: }$\left(\Sigma^{\dagger}\Sigma\right)^{T}=\Sigma^{\dagger}\Sigma$

\begin{proof}
	\emph{~
		\begin{eqnarray*}
			\left(\Sigma^{\dagger}\Sigma\right)^{T} & = & \left(\left(A^{\dagger^{T}}\Sigma_{x}^{-1}A^{\dagger}\right)\left(A\Sigma_{x}A^{T}\right)\right)^{T}\\
			& = & \left(A^{\dagger^{T}}A^{T}\right)^{T}\\
			& = & AA^{\dagger}\\
			& = & \left(AA^{\dagger}\right)^{T}\\
			& = & A^{\dagger^{T}}A^{T}\\
			& = & \left(A^{\dagger^{T}}\Sigma_{x}^{-1}A^{\dagger}\right)\left(A\Sigma_{x}A^{T}\right)\\
			& = & \Sigma^{\dagger}\Sigma
		\end{eqnarray*}
	}
	
	The fourth equality holds because $A^{\dagger}$ is the pseudo-inverse
	of $A$, and therefore $AA^{\dagger}$ is symmetric. 
\end{proof}

\end{document}